\title[Safe Model-Based Multi-Agent Mean-Field Reinforcement Learning]{Safe Model-Based Multi-Agent \\ Mean-Field Reinforcement Learning}
\author{Matej Jusup}
\affiliation{
  \institution{ETH Zurich}
  \city{Zurich}
  \country{Switzerland}
  }
\email{mjusup@ethz.ch}
\author{Barna P\'asztor}
\affiliation{
  \institution{ETH Zurich}
  \city{Zurich}
  \country{Switzerland}}
\email{barna.pasztor@ai.ethz.ch}
\author{Tadeusz Janik}
\affiliation{
  \institution{ETH Zurich}
  \city{Zurich}
  \country{Switzerland}}
\email{tjanik@student.ethz.ch}
\author{Kenan Zhang}
\affiliation{
  \institution{EPFL Lausanne}
  \city{Lausanne}
  \country{Switzerland}}
\email{kenan.zhang@epfl.ch}
\author{Francesco Corman}
\affiliation{
  \institution{ETH Zurich}
  \city{Zurich}
  \country{Switzerland}}
\email{corman@ethz.ch}
\author{Andreas Krause }
\affiliation{
  \institution{ETH Zurich}
  \city{Zurich}
  \country{Switzerland}}
\email{krausea@ethz.ch}
\author{Ilija Bogunovic}
\affiliation{
  \institution{University College London}
  \city{London}
  \country{United Kingdom}}
\email{i.bogunovic@ucl.ac.uk}
\begin{abstract}
Many applications, e.g., in shared mobility, require coordinating a large number of agents.  Mean-field reinforcement learning addresses the resulting scalability challenge by optimizing the policy of a representative agent interacting with the infinite population of identical agents instead of considering individual pairwise interactions. In this paper, we address an important generalization where there exist global constraints on the distribution of agents (e.g., requiring capacity constraints or minimum coverage requirements to be met). We propose \algnme, the first model-based mean-field reinforcement learning algorithm that attains safe policies even in the case of \textit{unknown} transitions. As a key ingredient, it uses epistemic uncertainty in the transition model within a log-barrier approach to ensure pessimistic constraints satisfaction with high probability. Beyond the synthetic swarm motion benchmark, we showcase \algnmespace on the vehicle repositioning problem faced by many shared mobility operators and evaluate its performance through simulations built on vehicle trajectory data from a service provider in Shenzhen. Our algorithm effectively meets the demand in critical areas while ensuring service accessibility in regions with low demand.
\end{abstract}
\keywords{Multi-agent reinforcement learning; Mean-field control; Global safety; Epistemic uncertainty; Probabilistic neural network ensemble; Shared mobility; Vehicle repositioning}
\newcommand{\BibTeX}{\rm B\kern-.05em{\sc i\kern-.025em b}\kern-.08em\TeX}
\begin{document}


\pagestyle{fancy}
\fancyhead{}


\maketitle 


\begin{figure}[t!]
\centering
\vspace{0.4cm}
 \includegraphics[width=\columnwidth]{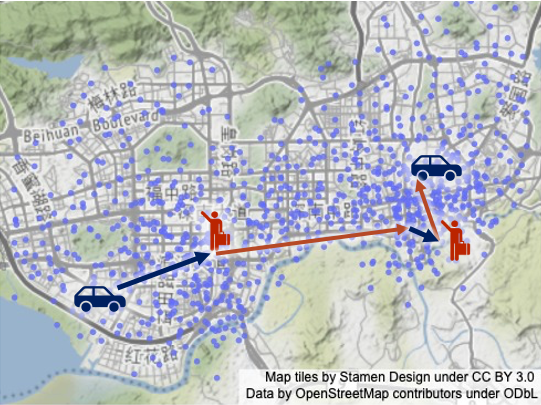}
 \caption{An illustration of vehicles' spatial distribution (light-blue scatters), repositioning trips (blue arrows), and a trajectory of passenger trips (red arrows). }
 \label{fig:trajectory}
 \Description{An illustration of vehicles' spatial distribution (light-blue scatters), repositioning trips (blue arrows), and a trip trajectory as a sequence of passenger trips (red arrows)}
\end{figure} 

\vspace{-1cm}
\section{Introduction}\label{sec:intro}
Multi-Agent Reinforcement Learning (MARL) is a rapidly growing field that seeks to understand and optimize the behavior of multiple agents interacting in a shared environment. MARL has a wide range of potential applications, including vehicle repositioning in shared mobility services (e.g., moving idle vehicles from low-demand to high-demand areas~\cite{lin2018efficient}), swarm robotics (e.g., operating a swarm of drones~\cite{alon2020multi}), and smart grids (e.g., operating a network of sensors in electric system~\cite{roesch}).
The interactions between agents in these complex systems introduce several challenges, including non-stationarity, scalability, competing learning goals, and varying information structure.
Mean-Field Control (MFC) addresses the scalability and non-stationarity hurdles associated with MARL by exploiting the insight that many relevant MARL problems involve a large number of very similar agents working towards the same goal.
Instead of focusing on the individual agents and their interactions, MFC considers an asymptotically large population of identical cooperative agents and models them as a distribution on the state space. This approach circumvents the problem's dependency on the population size, enabling the consideration of large populations.
The solutions obtained by MFC are often sufficient for the finite-agent equivalent problem~\cite{lacker2017limit,mondal2022mean,wang2020breaking,chen2021pessimism} in spite of the introduced approximations.
An example of such a system is a ride-hailing platform that serves on-demand trips with a fleet of vehicles. The platform needs to proactively reposition idle vehicles based on their current locations, the locations of the other vehicles in the fleet, and future demand patterns (see \Cref{fig:trajectory}) to maximize the number of fulfilled trips and minimize customer waiting times. Additionally, the platform may be obligated by external regulators to guarantee service accessibility across the entire service region. The problem quickly becomes intractable as the number of vehicles increases. A further difficulty lies in modeling the traffic flows. Due to numerous infrastructure, external, and driver behavioral factors, which are often region-specific, it is laborious and often difficult to determine transitions precisely \cite{daganzo1994cell,van2018link,bliemer2020static}.    

In this paper, we focus on learning the \textit{safe optimal policies} for a large multi-agent system when the underlying transitions are \textit{unknown}. In most real-world systems, the transitions must be learned from the data obtained from repeated interactions with the environment. We assume that the cost of obtaining data from the environment is high and seek to design a model-based solution that efficiently uses the collected data. Existing works consider solving the MFC problem via model-free or model-based methods without safety guarantees.
However, the proposed \textit{Safe Model-Based Multi-Agent Mean-Field Upper-Confidence Reinforcement Learning}  (\algnme) algorithm focuses on learning underlying transitions and deriving optimal policies for the mean-field setting while avoiding undesired or dangerous distributions of agents' population.\looseness=-1

\textbf{Contributions.} 
\Cref{sec:math} extends the MFC setting with safety constraints and defines a novel comprehensive notion of global population-based safety. To address safety-constrained environments featuring a large population of agents, in \Cref{sec:proposed_method}, we propose a model-based mean-field reinforcement learning algorithm called \algnme. Our algorithm leverages epistemic uncertainty in the transition model, employing a log-barrier approach to guarantee pessimistic satisfaction of safety constraints and enables the derivation of safe policies. In \Cref{sec:experiments}, we conduct empirical testing on the synthetic swarm motion benchmark and real-world vehicle repositioning problem, a challenge commonly faced by shared mobility operators. Our results demonstrate that the policies derived using \algnmespace successfully fulfill demand in critical demand hotspots while ensuring service accessibility in areas with lower demand.\looseness=-1

\section{Related Work}
Our notion of safety for the mean-field problem extends the frameworks of \textit{Mean-Field Games} (MFG) and \textit{Mean-Field Control} (MFC)~\cite{lasry2006jeux1,lasry2006jeux2, huang2006large, huang2007large}. For a summary of the progress focusing on MFGs, see~\cite{LauriereLearningSurvey} and references therein. We focus on MFCs in this work, which assume cooperative agents in contrast to MFGs, which assume competition.~\cite{gast2012mean, Motte2019Mean-fieldControls, Gu2019DynamicMFCs,  Gu2021Mean-FieldAnalysis, Bauerle2021MeanProcesses} address the problem of solving MFCs under known transitions, i.e., planning, while~\cite{wang2020breaking, carmona2019model, Angiuli2022UnifiedProblems, Angiuli2021ReinforcementEconomics, Carmona2019Linear-QuadraticMethods, Wang2021GlobalTime, Carmona2021Linear-quadraticOptimization, subramanian_reinforcement_2019} consider model-free Q-learning and Policy Gradient methods in various settings. Closest to our approach,~\cite{pasztor2021efficient} introduces \ucrl, a model-based, on-policy algorithm, which is more sample efficient than other proposed approaches. Similarly to~\cite{jaksch2010near, chowdhury2019online, curi2020efficient} for model-based single-agent RL and~\cite{Sessa2022EfficientComputation} for model-based MARL, \ucrlspace uses the epistemic uncertainty in the transition model to design optimistic policies that efficiently balance exploration and exploitation in the environment and maximize sample efficiency. This is also the setting of our interest. However, safety is not considered in any of these methods.\looseness=-1

In terms of \textit{safety}, there are two main ways of handling it in RL; assigning significantly lower rewards to unsafe states~\cite{moldovan2012safe} and providing additional knowledge to the agents~\cite{song2012efficient} or using the notion of controllability to avoid unsafe policies explicitly~\cite{gehring2013smart}. 
Furthermore, the following approaches combine the two methods;~\cite{berkenkamp2017safe} uses Lyapunov functions to restrict the safe policy space,~\cite{cheng2019end} projects unsafe policies to a safe set via a control barrier function, and~\cite{alshiekh2018safe} introduces shielding, i.e., correcting actions only if they lead to unsafe states.
For comprehensive overviews on safe RL, we refer the reader to~\cite{garcia2015comprehensive, gu2022review}.
As an alternative, \cite{usmanova2022log} demonstrates that the general-purpose stochastic optimization methods can be used for constrained MDPs, i.e., safe RL formulations. Similar to our work, they use the log-barrier approach to turn constrained into unconstrained optimization.  
Nevertheless, the aforementioned works focus mainly on individual agents, while in large-scale multi-agent environments, maintaining individual safety becomes intractable, and the focus shifts towards global safety measures.
For multi-agent problems, previous works focus on satisfying the individual constraints of the agents while learning in a multi-agent environment.
For the cooperative problem,~\cite{gu2021multi} proposes two model-free algorithms, MACPO and MAPPO-Lagrangian. MACPO is computationally expensive, while MAPPO-Lagrangian does not guarantee hard constraints for safety.
Dec-PG solves the decentralized learning problem using a consensus network that shares weights between neighboring agents~\cite{lu2021decentralized}.
For the non-cooperative decentralized MARL problem with individual constraints,~\cite{sheebaelhamd2021safe} adds a safety layer to multi-agent DDPG~\cite{lowe2017multi} similar to single-agent Safe DDPG~\cite{dalal2018safe} for continuous state-action spaces.
Aggregated and population-based constraints have been addressed in the following works.
CMIX~\cite{liu2021cmix} extends QMIX~\cite{rashid2020monotonic}, which considers average and peak constraints defined over the whole population of agents in a centralized-learning decentralized-execution framework. Their formulation relies on the joint state and action spaces, making it infeasible for a large population of agents.
\cite{elsayed2021safe} introduces an additional shielding layer that corrects unsafe actions. Their centralized approach suffers from scalability issues, while the factorized shielding method monitors only a subset of the state or action space.
For mixed cooperative-competitive settings,~\cite{zhu2020multi} uses the notion of returnability to define a safe, decentralized, multi-agent version of Q-learning that ensures individual and joint constraints. However, their approach requires an estimation of other agents' policies, which does not scale well for large systems.
Works considering constraints on the whole population fail to overcome the exponential nature of multi-agent problems or require domain knowledge to factorize the problems into subsets.
 
Closest to our setting,~\cite{mondal2022mean} introduces constraints to the MFC by defining a cost function and a threshold that the discounted sum of costs can not exceed. 
We propose a different formulation that restricts the set of feasible mean-field distributions at every step, therefore, addressing the scalability issue and allowing for more specific control over constraints and safe population distributions.

\section{Problem Statement}\label{sec:math}
Formally, we consider the \textit{episodic} setting, where episodes $n {}={}1, \ldots, N$ each have $t {}={}0, \ldots, T-1$ discrete steps and the terminal step $t=T$. The state space $\s \subseteq \R^p$ and action space $\A \subseteq \R^q$ are the same for every agent. We use $s_{n,t}^{(i)} \in \s$ and $a_{n,t}^{(i)} \in \A$, to denote the state and action of agent $i \in \{1, \ldots, m\}$ in episode $n$ at step $t$. For every $n$ and $t$, the \textit{mean-field distribution} $\mu_{n,t} \in \pp(\s)$ describes the global state with $m$ identical agents when $m \rightarrow {}+{}\infty$, i.e.,
\begin{equation*}
    \mu_{n,t}(ds) {}={}\lim_{m \xrightarrow{} \infty} \frac{1}{m} \sum_{i=1}^m \I(s_{n,t}^{(i)}\in ds),
\end{equation*}
where $\I(\cdot)$ is the indicator function, and $\pp(\s)$ is the set of probability measures over the state space $\s$.\looseness=-1

We consider the MFC model to capture a collective behavior of a \textit{large} number of \textit{collaborative} agents operating in the shared \textit{stochastic environment}. This model assumes the limiting regime of \textit{infinitely} many agents and \textit{homogeneity}. Namely, all agents are identical and indistinguishable, therefore, solving MFC amounts to finding an optimal policy for a single, so-called, \textit{representative agent}.
The representative agent interacts with the mean-field distribution of agents instead of focusing on individual interactions and optimizes a collective reward. Due to the homogeneity assumption, the representative agent's policy is used to control all the agents in the environment. \looseness=-1

We posit that the reward $r: \s \times \pp(\s) \times \A \rightarrow \mathbb{R}$ of the representative agent is known and that it depends on the states of the other agents through the mean-field distribution.\footnote{Our framework easily extends to unknown reward by estimating its epistemic uncertainty and learning it similarly to learning unknown transitions (see~\cite{chowdhury2019online}).}
Before every episode $n$, the representative agent selects a non-stationary policy profile $\bpi_n {}={}(\pi_{n,0},\ldots,\pi_{n,T-1}) \in \Pi$ where individual policies
are of the form $\pi_{n,t}: \s \times \pp(\s) \rightarrow \A$ and $\Pi$ is the set of admissible policy profiles. The policy profile is then shared with all the agents that choose their actions according to $\bpi_n$ during episode $n$. \looseness=-1

We consider a general family of deterministic transitions $f:\s \times \pp(\s) \times \A \rightarrow \s$. Given the current mean-field distribution $\mu_{n,t}$, the representative agent's state $s_{n,t}$ and its action $a_{n,t}$, the next representative agent's state $s_{n,t{}+{}1}$ is given by
\begin{equation} \label{eqn:transitions}
    s_{n,t{}+{}1} {}={}f(s_{n,t}, \mu_{n,t}, a_{n,t}) {}+{} \epsilon_{n,t},
\end{equation}
where $\epsilon_{n,t}$ is a Gaussian noise with known variance. We assume that the transitions are \textit{unknown} and are to be inferred from collected trajectories across episodes. 

\textbf{Mean-field transitions.} 
State-to-state transition map in \Cref{eqn:transitions} naturally extends to the \textit{mean-field transitions} induced by a policy profile $\bpi_n$ and transitions $f$  in episode $n$ (see~\cite[Lemma 1]{pasztor2021efficient}) \looseness=-1
\begin{equation} \label{eqn:mf_transition}
    \mu_{n,t{}+{}1}(ds') {}={}\int_\s \p[s_{n,t{}+{}1} \in ds']\mu_{n,t}(ds) ,
\end{equation}
where $s_{n,t{}+{}1}$ is the next representative agent state and $\mu_{n,t}(ds) {}={}\p[s_{n,t} \in ds]$ under $\bpi_n$ for all $t$.
To simplify the notation, we use $U(\cdot)$ to denote the mean-field transition function from \Cref{eqn:mf_transition}, i.e., we have $\mu_{n,t{}+{}1} {}={}U(\mu_{n,t}, \pi_{n,t}, f)$. We further introduce the notation $z_{n,t} \in \zz {}={} \s \times \pp(\s) \times \A$ to denote the tuple $(s_{n,t}, \mu_{n,t}, a_{n,t})$.

For a given policy profile $\bpi_n$ and mean-field distribution $\mu$, the performance of the representative agent at step $t$ is measured via its expected future reward for the rest of the episode, i.e., $$\E \left[\sum_{j=t}^{T-1} r(z_{n,j}) {}\rvert{} \mu_{n,t} {}={}\mu\right].$$
Here, the expectation is over the randomness in the transitions and over the sampling of the initial state, i.e., $s_{n,t} \sim \mu$. \looseness=-1

\subsection{Safe Mean-Field Reinforcement Learning}
We extend the MFC with global safety constraints,\footnote{For the exposition, we use a single constraint, however, our approach is directly extendable to multiple constraints.} i.e., the constraints imposed on the mean-field distributions. We consider safety functions $h: \pp(\s) \to \mathbb{R}$ over probability distributions. Given some hard safety threshold $C \in \mathbb{R}$, we consider a mean-field distribution $\mu$ as safe if it satisfies $h(\mu) {}\geq{} C$, or, equivalently \looseness=-1 
\begin{equation} \label{eqn:safety}
    h_C(\mu) {}:={}h(\mu) - C {}\geq{} 0.
\end{equation}
We denote the set of safe mean-field distributions for a safety constraint $h_C(\cdot)$ as $\zeta {}={}\{ \mu \in \pp(\s): h_C(\mu) {}\geq{} 0 \}$. Hence, our focus is on the safety of the system as a whole rather than the safety of individual agents, as it becomes intractable to handle individual agents' states and interactions in the case of a large population. \looseness=-1

For a given initial distribution $\mu_0$, we formally define the \textit{Safe-MFC}\footnote{We refer to formulations under known transitions as control problems, while we reserve the term reinforcement learning for formulations under unknown transitions.} problem as follows
\begin{subequations}\label{eqn:safe-mf-rl}
\begin{align}
    \bpi^* {}={}\argmax_{\bpi \in \Pi} {}& \E \Bigg [ \sum_{t=0}^{T-1} r(z_t) \Big\rvert \mu_{0}\Bigg ] \\
    \text{subject to} \quad a_{t} &{}={}\pi_t(s_{t}, \mu_t) \\
    s_{t{}+{}1} &{}={}f(z_{t}) {}+{} \epsilon_{t}  \label{eqn:safe-mf-rl-transition}\\
    \mu_{t{}+{}1} &{}={}U(\mu_t, \pi_t, f) \label{eqn:safe-mf-rl-mf-transition} \\
    h_C(\mu_{t{}+{}1}) &{}\geq{} 0, \label{eqn:safe-mf-rl-safety}
\end{align}
\end{subequations}
where we explicitly require induced mean-field distributions $\{\mu_t\}_{t=1}^T$ to reside in the safe set $\zeta$ by restricting the set of admissible policy profiles $\Pi$ to policy profiles that induce safe distributions.
To ensure complete safety, we note that the initial mean-field distribution \textit{$\mu_0$ must be in the safe set $\zeta$ as our learning protocol does not induce it} (see \Cref{alg:learning_protocol}). \looseness=-1

We make the following assumptions about the environment using Wasserstein $1$-distance defined by $$W_1(\mu, \mu') {}:={}\inf_{\gamma \in \Gamma(\mu, \mu')} \E_{(x,y)\sim\gamma}\|x - y \|_1,$$
where $\Gamma(\mu, \mu')$ is the set of all couplings of $\mu$ and $\mu'$ (i.e., a joint probability distributions with marginals $\mu$ and $\mu'$). We further define the distance between $z=(s, \mu, a)$ and $z'=(s', \mu', a')$ as
$$d(z,z'){}:={}\|s-s'\|_2 {}+{} \|a-a'\|_2 {}+{} W_1(\mu, \mu').$$ 

\begin{assumption}[Transitions Lipschitz continuity]
\label{asm:transition_lipschitz}
The transition function $f(\cdot)$ is $L_f$-Lipschitz-continuous, i.e., $$\|f(z) - f(z')\|_2 {}\leq{} L_f d(z,z').$$
\end{assumption}

\begin{assumption}[Mean-field policies Lipschitz continuity]
\label{asm:policy_lipschitz}
The individual policies $\pi$ present in any admissible policy profile $\bpi$ in $\Pi$ are $L_{\pi}$-Lipschitz-continuous, i.e., $$\|\pi(s, \mu) - \pi(s', \mu')\|_2 {}\leq{} L_{\pi}(\|s-s'\|_2 {}+{} W_1(\mu, \mu'))$$ for all $\pi \in \bpi \in \Pi$.
\end{assumption}

\begin{assumption}[Reward Lipschitz continuity]
\label{asm:reward_lipschitz}
The reward function $r(\cdot)$ is $L_r$-Lipschitz-continuous, i.e., $$\|r(z) - r(z')\|_2 {}\leq{} L_rd(z,z').$$
\end{assumption}

These assumptions are considered standard in model-based learning~\cite{jaksch2010near, chowdhury2019online, Sessa2022EfficientComputation, pasztor2021efficient} and mild, as individual policies and rewards are typically designed such that they meet these smoothness requirements. For example, we use neural networks with Lipschitz-continuous activations to represent our policies (see \Cref{apx:vehicle_learning_protocol}). \looseness=-1

\subsection{Examples of Safety Constraints} \label{sec:safety_constraints_examples}
We can model multiple classes of safety constraints $h_C(\cdot) {}\geq{} 0$ that naturally appear in real-world applications such as vehicle repositioning, traffic flow, congestion control, and others.

\textbf{Entropic safety.}
Entropic constraints can be used in multi-agent systems to prevent overcrowding by promoting spatial diversity and avoiding excessive clustering. Incorporating an entropic term in the decision-making process encourages the controller to distribute the agents evenly within the state space. This might be particularly useful in applications that include crowd behavior, such as operating a swarm of drones or a fleet of vehicles. In such scenarios, we define safety by imposing a threshold $C \geq 0$ on the differential entropy \looseness=-1
\begin{equation} \label{eqn:differential_entropy}
    H(\mu) {}:={}-\int_\s\log\mu(s)\mu(ds)
\end{equation}
of the mean-field distribution $\mu$, i.e.,
\begin{equation*} \label{eqn:entropic_safety}
    h_C(\mu) {}:={}H(\mu) - C.
\end{equation*}

\textbf{Distribution similarity.}
Another way to define safety is by preventing $\mu$ from diverging from a prior distribution $\nu_0$. The prior can be based on previous studies, expert opinions or regulatory requirements.
We can use a penalty function that quantifies the allowed dissimilarity between the two distributions \looseness=-1
\begin{equation*}
    h_{C}(\mu; \nu_0) {}:={}C - D(\mu, \nu_0),
\end{equation*}
with $C \geq 0$ and where the distance function between probability measures $D:\pp(\s)\times\pp(\s)\rightarrow\R_{\geq 0}$ depends on the problem at hand.

We provide further examples of safety functions in \Cref{apx:proof_useful_constraints} together with proofs that they satisfy \Cref{asm:safety_lipschitz}.

\subsection{Statistical Model and Safety Implications} \label{sec:statistical_model}
The representative agent learns about unknown transitions by interacting with the environment. We take a model-based approach to achieve sample efficiency by sequentially updating and improving the transition model estimates based on the previously observed transitions.
At the beginning of each episode $n$, the representative agent updates its model based on $\cup_{i=1}^{n-1}\D_i$ where $\D_i=\{(z_{i,t},s_{i,t{}+{}1})\}_{t=0}^{T-1}$ and $z_{i,t}=(s_{i,t},\mu_{i,t},a_{i,t})$ is the set of observations in episode $i$ for $i=1,...,n-1$, i.e., up until the beginning of episode $n$.
We estimate the mean $\bM_{n-1}:\zz \rightarrow \s$ and covariance $\bSigma_{n-1}:\zz \rightarrow \R^{p \times p}$ functions from the set of collected trajectories $\cup_{i=1}^{n-1} \D_i$, and denote model's confidence with $\bsigma^2_{n-1}(z)=\text{diag}(\bSigma_{n-1}(z))$. We assume that the statistical model is calibrated, meaning that at the beginning of every episode, the agent has \textit{high probability confidence bounds} around unknown transitions. 
The following assumptions are consistent with~\cite{srinivas2009gaussian, chowdhury2019online, curi2020efficient, Sessa2022EfficientComputation, pasztor2021efficient} and other literature which aims to exclude extreme functionals from consideration.\looseness=-1

\begin{assumption}[Calibrated model]
\label{asm:calibrated_model}
Let $\bM_{n-1}(\cdot)$ and $\bSigma_{n-1}(\cdot)$ be the mean and covariance functions of the statistical model of $f$ conditioned on $n - 1$ observed episodes. For the confidence function $\bsigma_{n-1}(\cdot)$, there exists a non-decreasing, strictly positive sequence $\{\beta_n\}_{n {}\geq{} 0}$ such that for $\delta > 0$ with probability at least $1-\delta$, we have jointly for all $n {}\geq{} 1$ and $z \in \zz$ that $|f(z) - \bM_{n-1}(z)| {}\leq{} \beta_{n-1}\bsigma_{n-1}(z)$ elementwise.
\end{assumption}

\begin{assumption}[Estimated confidence Lipschitz continuity]
\label{asm:calibrated_lipschitz}
The confidence function $\bsigma_n(\cdot)$ is $L_{\sigma}$-Lipschitz-continuous for all $n {}\geq{} 0$, i.e., $\|\bsigma_n(z) - \bsigma_n(z')\|_2 {}\leq{} L_\sigma d(z,z')$.
\end{assumption}

Since the true transition model is unknown in \Cref{eqn:safe-mf-rl-transition} and \Cref{eqn:safe-mf-rl-mf-transition}, at the beginning of every episode $n$, the representative agent can only construct the confidence set of transitions $\F_{n-1}$ with $\bM_{n-1}(\cdot)$ and $\bsigma_{n-1}(\cdot)$ estimated based on the observations up until the end of the previous episode $n-1$, i.e.,\looseness=-1
\begin{align}
        \F_{n-1} {}={}\Big\lbrace \tf: \tf \text{ is calibrated w.r.t. } \bM_{n-1}(\cdot) \text{ and } \bsigma_{n-1}(\cdot) \Big\rbrace \label{eq:model_class}
\end{align}
The crucial challenge in \textit{Safe Mean-Field Reinforcement Learning} is that the representative agent can only select transitions $\tf \in \F_{n-1}$ at the beginning of the episode $n$ and use it instead of true transitions $f$ when solving \Cref{eqn:safe-mf-rl} to find an optimal policy profile $\bpi_n^*$. The resulting mean-field distributions $\lbrace \tmu_t \rbrace_{t=1}^{T}$ are then different from $\lbrace \mu_t\rbrace_{t=1}^{T}$ (i.e., the ones that correspond to the true transition model), and hence the constraint \Cref{eqn:safety} guarantees only the safety under the estimated transitions $\tf$, i.e., $h_C(\tmu_t) {}\geq{} 0$. In contrast, the original environment constraint $h_C(\mu_t) {}\geq{} 0$ might be violated, resulting in unsafe mean-field distributions under true transitions $f$. 

Next, we demonstrate how to modify the constraint \Cref{eqn:safe-mf-rl-safety} for the optimization problem \Cref{eqn:safe-mf-rl} when an estimated transition function $\tf$ is used from the confidence set $\F_{n-1}$ such that the mean-field distributions $\mu_{n,t}$ induced by the resulting policy profile $\bpi_n^*$ do not violate the original constraint under true transitions $f$. First, we require the following property for any safety function $h(\cdot)$.

\begin{assumption}[Safety Lipschitz continuity] \label{asm:safety_lipschitz}
The safety function $h(\cdot)$ is $L_h$-Lipschitz-continuous, i.e., $|h(\mu) - h(\mu')| {}\leq{} L_h W_1(\mu, \mu').$ \looseness=-1
\end{assumption}
The following lemma shows that we can ensure safety under true transitions $f$ by having tighter constraints under any estimated transitions $\tf$ selected from $\F_{n-1}$. \looseness=-1

\begin{lemma} \label{lemma:hallucinated_safety}
Given a fixed policy profile $\bpi_n$, a safety function $h(\cdot)$ satisfying \Cref{asm:safety_lipschitz} and a safety threshold $C \in \R$, we have in episode $n$ for all steps $t$ \looseness=-1
\begin{equation*}\label{eqn:gap}
    |h_C(\tmu_{n,t}) - h_C(\mu_{n,t})| {}{}\leq{}{} L_h C_{n,t},
\end{equation*}
where $C_{n,t}$ is an arbitrary constant that satisfies $C_{n,t} {}\geq{} W_1(\tilde{\mu}_{n,t}, \mu_{n,t})$.  
\end{lemma}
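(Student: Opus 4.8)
The plan is to exploit that the additive threshold $C$ is common to both arguments and therefore cancels, reducing the claim to a single application of \Cref{asm:safety_lipschitz}. First I would substitute the definition $h_C(\mu) = h(\mu) - C$ from \Cref{eqn:safety} into both terms, so that
\begin{equation*}
    h_C(\tmu_{n,t}) - h_C(\mu_{n,t}) = \big(h(\tmu_{n,t}) - C\big) - \big(h(\mu_{n,t}) - C\big) = h(\tmu_{n,t}) - h(\mu_{n,t}),
\end{equation*}
which removes all dependence on the threshold. Taking absolute values leaves me with $|h(\tmu_{n,t}) - h(\mu_{n,t})|$, i.e., the problem no longer involves $C$ at all.

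Next I would invoke the Lipschitz continuity of the safety function: by \Cref{asm:safety_lipschitz}, $|h(\mu) - h(\mu')| \leq L_h W_1(\mu, \mu')$ for any two distributions, and instantiating this with $\mu = \tmu_{n,t}$ and $\mu' = \mu_{n,t}$ gives
\begin{equation*}
    |h_C(\tmu_{n,t}) - h_C(\mu_{n,t})| \leq L_h W_1(\tmu_{n,t}, \mu_{n,t}).
\end{equation*}
Finally, since $C_{n,t}$ is chosen so that $C_{n,t} \geq W_1(\tmu_{n,t}, \mu_{n,t})$ and the Lipschitz constant satisfies $L_h \geq 0$, monotonicity of multiplication by a nonnegative scalar yields $L_h W_1(\tmu_{n,t}, \mu_{n,t}) \leq L_h C_{n,t}$, which is exactly the claimed bound. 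The chain is uniform in $t$, so it holds for every step of episode $n$.

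Because every step is either an exact cancellation or a one-line inequality, I do not anticipate a genuine obstacle within this lemma itself; its role is really to package the Lipschitz step in a reusable form. The substantive difficulty is deferred to the quantity $C_{n,t}$: the bound becomes operational only once one controls the Wasserstein distance $W_1(\tmu_{n,t}, \mu_{n,t})$ between the distributions induced by the estimated and the true transitions by a concrete, computable expression. That propagation bound --- which must combine model calibration (\Cref{asm:calibrated_model}) with the Lipschitz properties of the transitions and policies (\Cref{asm:transition_lipschitz,asm:policy_lipschitz}) and the one-step stability of the mean-field operator $U(\cdot)$ accumulated across the recursion in $t$ --- is where the real work lies and is presumably established separately.
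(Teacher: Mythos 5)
Your proof is correct and follows exactly the paper's own argument: cancel the threshold $C$ via the definition of $h_C$, apply \Cref{asm:safety_lipschitz} to bound $|h(\tmu_{n,t}) - h(\mu_{n,t})|$ by $L_h W_1(\tmu_{n,t},\mu_{n,t})$, and then use $C_{n,t} \geq W_1(\tmu_{n,t},\mu_{n,t})$. Your closing remark is also accurate --- the paper defers the computation of a valid $C_{n,t}$ to \Cref{apx:gp_upper_bound}, where the Wasserstein distance is bounded using the calibration and Lipschitz assumptions.
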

\begin{proof}
    For arbitrary $\tmu_{n,t}, \mu_{n,t} \in \pp(\s)$ we have $$|h_C(\tmu_{n,t}) {}-{} h_C(\mu_{n,t})| {}={} |h(\tmu_{n,t}) {}-{} h(\mu_{n,t})| {}{}\leq{}{} L_h W_1(\tmu_{n,t}, \mu_{n,t}) {}{}\leq{}{} L_hC_{n, t},$$ where the first equality follows from the definition of $h_C(\cdot)$, the first inequality follows from \Cref{asm:safety_lipschitz} and the second inequality comes from $C_{n,t} {}\geq{} W_1(\tilde{\mu}_{n,t}, \mu_{n,t})$.\looseness=-1
\end{proof}

Crucially, using \Cref{lemma:hallucinated_safety} we can formulate a safety constraint for the optimization under estimated transitions $\tf$ that ensures that the constraint under true transitions $f$ is satisfied with high probability. \looseness=-1

\begin{corollary} \label{cor:safety_guarantee}
    For every episode $n$ and step $t$, $h_C(\tmu_{n,t}) {}\geq{} L_h C_{n,t}$ implies $h_C(\mu_{n,t}) {}\geq{} 0$ guaranteeing the safety of the original system.
\end{corollary}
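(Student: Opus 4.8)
The plan is to obtain this as an immediate consequence of \Cref{lemma:hallucinated_safety}, so the proof reduces to unpacking the absolute-value bound and substituting the hypothesis; I expect no genuine obstacle here beyond bookkeeping. First I would invoke \Cref{lemma:hallucinated_safety} for the fixed policy profile $\bpi_n$, which furnishes the two-sided bound $|h_C(\tmu_{n,t}) - h_C(\mu_{n,t})| \leq L_h C_{n,t}$ at every step $t$ of episode $n$, valid for any constant $C_{n,t} \geq W_1(\tmu_{n,t}, \mu_{n,t})$.

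Next I would keep only the half of this bound that is needed, namely $h_C(\tmu_{n,t}) - h_C(\mu_{n,t}) \leq L_h C_{n,t}$ (since $|x| \leq y$ implies $x \leq y$), and rearrange it into $h_C(\mu_{n,t}) \geq h_C(\tmu_{n,t}) - L_h C_{n,t}$. Substituting the corollary's hypothesis $h_C(\tmu_{n,t}) \geq L_h C_{n,t}$ into the right-hand side then yields $h_C(\mu_{n,t}) \geq L_h C_{n,t} - L_h C_{n,t} = 0$, which is exactly the asserted safety of the original system under true transitions $f$. This is the entire deterministic content of the statement.

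Finally I would pin down where the high-probability qualification actually enters, since the chain of inequalities above is purely deterministic once a valid $C_{n,t}$ has been fixed. The randomness lives entirely upstream, in whether the representative agent can certify $C_{n,t} \geq W_1(\tmu_{n,t}, \mu_{n,t})$ from its estimated model; by \Cref{asm:calibrated_model} such a certification holds jointly across all $n$ and $t$ with probability at least $1-\delta$, so the original-system guarantee $h_C(\mu_{n,t}) \geq 0$ simply inherits this same high-probability status. The only real difficulty, such as it is, therefore lies not in this corollary but in producing a tight, computable bound $C_{n,t}$ on the Wasserstein gap between the hallucinated rollout $\tmu_{n,t}$ and the true rollout $\mu_{n,t}$, which must be handled separately.
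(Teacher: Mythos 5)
Your proof is correct and follows essentially the same route as the paper's: both arguments combine the two-sided bound from \Cref{lemma:hallucinated_safety} with a triangle-inequality rearrangement and then substitute the hypothesis $h_C(\tmu_{n,t}) \geq L_h C_{n,t}$ to conclude $h_C(\mu_{n,t}) \geq 0$. Your closing remarks on where the high-probability qualification and the construction of a valid $C_{n,t}$ actually live are accurate but go beyond what the paper's short proof records.
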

\begin{proof}
    The corollary follows directly from \Cref{lemma:hallucinated_safety} and the triangle inequality, which are used in the third and the second inequality, respectively
    \begin{align*}
        L_h C_{n,t} 
        &{}\leq{} h_C(\tmu_{n,t}) \\ 
        &{}\leq{} |h_C(\tmu_{n,t}) - h_C(\mu_{n,t})| {}+{} h_C(\mu_{n,t}) \\
        &{}\leq{} L_h C_{n,t} {}+{} h_C(\mu_{n,t}).
    \end{align*}
    The claim is obtained by subtracting the positive constant $L_h C_{n,t}$ from both sides.
\end{proof} 

Then, $C_{n,t}$ for $t=1,\dots,T$ become parameters of the optimization problem (as defined in \Cref{sec:proposed_method}) that the representative agent faces at the beginning of episode $n$. However, choosing the appropriate values that comply with the condition $C_{n,t} {}\geq{} W_1(\tilde{\mu}_{n,t}, \mu_{n,t})$ is not trivial since $\mu_{n,t}$ depends on unknown true transitions of the system. Note that computing $C_{n,0}$ at the initial step $t=0$ is not necessary because the inequality is always guaranteed due to the initialization $\tmu_{n,0}=\mu_{n,0}$ for every episode $n$. In \Cref{apx:gp_upper_bound}, we demonstrate how to efficiently upper bound $W_1(\tilde{\mu}_{n,t}, \mu_{n,t})$ and obtain $C_{n,t}$ using the Lipschitz constants of the system and the statistical model's epistemic uncertainty. In particular, $C_{n,t}$ approaches zero, and $h_C(\tmu_{n,t}) {}\geq{} L_h C_{n,t}$ reduces to the constraint \Cref{eqn:safe-mf-rl-safety} as the estimated confidence $\bsigma_{n-1}(\cdot)$ shrinks due to the increasing number of observations available to estimate true transitions. \looseness=-1

\begin{algorithm*}[hbt!]
  \caption{Model-Based Learning Protocol in \algnme}
  \begin{algorithmic}[1]
    \Require Set of admissible policy profiles $\Pi$, safety constraint $h_C(\cdot)$, calibrated statistical model represented by $\bM_{n-1}(\cdot)$ and $\bSigma_{n-1}(\cdot)$, initial mean-field distribution $\mu_0$, known reward $r(\cdot)$, safety Lipschitz constant $L_h$, hyperparameter $\lambda$, number of episodes $N$, number of steps $T$ 
    \For{$n {}={}1,\ldots N$}
        \State Compute $C_{n,t}$ for $t {}={}1,\ldots,T$ as described in \Cref{apx:gp_upper_bound}
        \State Optimize the objective in \Cref{eqn:calibrated_safe} over the admissible policy profiles $\Pi$ and hallucinated transitions \Cref{eqn:hallucinated_transitions}
        \State Execute the obtained policy profile $\bpi_n^*$ and collect the trajectories $\D_n=\{(z_{n,t},s_{n,t{}+{}1})\}_{t=0}^{T-1}$ from the representative agent
        \State Update the confidence set of transitions $\F_{n-1}$ with the collected data to obtain $\F_{n}$ for the next episode
    \EndFor
    \Ensure $\bpi_N^* {}={}(\pi_{N,0}^*,\ldots,\pi_{N,T-1}^*)$
  \end{algorithmic}
  \label{alg:learning_protocol}
\end{algorithm*}

\section{\algnme} \label{sec:proposed_method}
In this section, we introduce a model-based approach for the \textit{Safe Mean-Field Reinforcement Learning} problem that combines the safety guarantees in \Cref{cor:safety_guarantee} with upper-bound confidence interval optimization. At the beginning of each episode $n$, the representative agent constructs the confidence set of transitions $\F_{n-1}$ (see \Cref{eq:model_class}) given the calibrated statistical model and previously observed data and selects a safe \textit{optimistic} policy profile $\bpi_n^*$ to obtain the highest value function within $\F_{n-1}$ while satisfying the safety constraint derived in \Cref{cor:safety_guarantee}.
In particular, the optimal policy profile $\bpi^*$ from \Cref{eqn:safe-mf-rl} is approximated at the episode $n$ by \looseness=-1
\begin{subequations}\label{eqn:calibrated_short}
\begin{align} 
    \bpi_n^* {}={}\argmax_{\bpi_n \in \Pi} {}& \max_{\tf_{n-1} \in \F_{n-1}} \E \Bigg [ \sum_{t=0}^{T-1} r(\tz_{n,t}) \Big\rvert \tmu_{n,0} {}={}\mu_{0}\Bigg ] \\
    \text{subject to} \quad \ta_{n,t} &{}={}\pi_{n,t}(\ts_{n,t}, \tmu_{n,t}) \\
    \ts_{n,t{}+{}1} &{}={}\tf_{n-1}(\tz_{n,t}) {}+{} \epsilon_{n,t} \\
    \tmu_{n,t{}+{}1} &{}={}U(\tmu_{n,t}, \pi_{n,t}, \tf_{n-1}) \\
    h_C(\tmu_{n,t{}+{}1}) &{}\geq{} L_h C_{n,t{}+{}1},
    \label{eqn:hard_safety_constraint}
\end{align}
\end{subequations}
with $\tz_{n,t} {}={}(\ts_{n,t}, \tmu_{n,t}, \ta_{n,t})$. \Cref{eqn:calibrated_short} optimizes over the function space $\F_{n-1}$ which is usually intractable even in bandit settings~\cite{dani2008stochastic}. Additionally, it must comply with the safety constraint \Cref{eqn:hard_safety_constraint}, further complicating the optimization. We utilize the \textit{hallucinated control} reparametrization and the \textit{log-barrier} method to alleviate these issues. After the reformulation of the problem, model-free or model-based mean-field optimization algorithms can be applied to find policy profile $\bpi_n^*$ at the beginning of episode $n$. \looseness=-1

We use an established approach known as \textit{Hallucinated Upper Confidence Reinforcement Learning} (H-UCRL)~\cite{moldovan2015optimism, curi2020efficient, pasztor2021efficient} and introduce an auxiliary function $\eta:\zz \rightarrow [-1, 1]^p$, where $p$ is the dimensionality of the state space $\s$, to define hallucinated transitions \looseness=-1
\begin{equation}\label{eqn:hallucinated_transitions}
    \tf_{n-1}(z) {}={}\bM_{n-1}(z) {}+{} \beta_{n-1}\bSigma_{n-1}(z)\eta(z).
\end{equation}
Notice that $\tf_{n-1}$ is calibrated for any $\eta(\cdot)$ under \Cref{asm:calibrated_model}, i.e., $\tf_{n-1} \in \F_{n-1}$. \Cref{asm:calibrated_model} further guarantees that every function $\tf_{n-1}$ can be expressed in the auxiliary form \Cref{eqn:hallucinated_transitions}
\begin{equation*}
\begin{split}
    &\forall \tf_{n-1} \in \F_{n-1} \; \exists \eta:\zz \rightarrow [-1, 1]^p \; \text{such that} \\
    {}&\tf_{n-1}(z) {}={}\bM_{n-1}(z) {}+{} \beta_{n-1}\bSigma_{n-1}(z)\eta(z), \; \forall z \in \zz.
\end{split}
\end{equation*}
Thus, the intractable optimization over the function space $\F_{n-1}$ in \Cref{eqn:calibrated_short} can be expressed as an optimization over the set of admissible policy profiles $\Pi$ and auxiliary function $\eta(\cdot)$ (see \Cref{apx:hallucinated_control} for further details).
Note that $\eta(z) {}={}\eta(s, \mu, \pi(s, \mu)) {}={}\eta(s, \mu)$ for a fixed individual policy $\pi$. This turns $\eta(\cdot)$ into a policy that exerts \textit{hallucinated control} over the epistemic uncertainty of the confidence set of transitions $\F_{n-1}$~\cite{curi2020efficient}. Furthermore, \Cref{eqn:hallucinated_transitions} allows us to optimize over parametrizable functions (e.g., \textit{neural networks}) $\bpi$ and $\eta(\cdot)$ using gradient ascent.

We introduce the safety constraint to the objective using the \textit{log-barrier method}~\cite{wright1992interior}. This restricts the domain on which the objective function is defined only to values that satisfy the constraint \Cref{eqn:hard_safety_constraint}, hence, turning \Cref{eqn:calibrated_short} to an unconstrained optimization problem. Combining these two methods yields the following optimization problem \looseness=-1
\begin{subequations} \label{eqn:calibrated_safe}
\begin{align} 
    \begin{split}
       \bpi_n^* &{}={}\argmax_{\bpi_n \in \Pi} \max_{\eta(\cdot) \in [-1,1]^p} \\   
        \E \Bigg[ \sum_{t=0}^{T-1} r(\tz_{n,t}) &{}+{} \lambda\log\left(h_C(\tmu_{n,t{}+{}1}) - L_h C_{n,t{}+{}1}\right) \Big\rvert \tmu_{n,0} {}={}\mu_{0} \Bigg] 
    \end{split} \label{eqn:log-barrier-objective} \\
    \text{subject to} \quad \ta_{n,t} &{}={}\pi_{n,t}(\ts_{n,t}, \tmu_{n,t}) \\
    \tf_{n-1}(\tz_{n,t}) &{}={}\bM_{n-1}(\tz_{n,t}) {}+{} \beta_{n-1}\bSigma_{n-1}(\tz_{n,t})\eta(\tz_{n,t}) \\
    \ts_{n,t{}+{}1} &{}={}\tf_{n-1}(\tz_{n,t}) {}+{} \epsilon_{n,t} \\
    \tmu_{n,t{}+{}1} &{}={}U(\tmu_{n,t}, \pi_{n,t}, \tf_{n-1}),
\end{align}
\end{subequations}
with $\tz_{n,t} {}={}(\ts_{n,t}, \tmu_{n,t}, \ta_{n,t})$ and $\lambda > 0$ being a tuneable hyperparameter used to balance between the reward and the safety constraint. Provided that the set of safe mean-field distributions (assuming the safe initial distribution $\mu_0$ is given) is not empty, $\bpi_n^*$ is guaranteed to satisfy the safety constraint during the policy rollout in episode $n$.\looseness=-1
\begin{remark}
    Note that \Cref{eqn:calibrated_safe} can also be used under known transitions by setting $\bM_{n-1}(\cdot)=f(\cdot)$, $\bSigma_{n-1}(\cdot) {}={}0$ and $L_hC_{n,t}=0$, hence, recovering the original constraint $h_C(\tmu_{n,t}) {}\geq{} 0$ from \Cref{eqn:safety}. In \Cref{sec:experiments}, we utilize this useful property to construct the upper bound for the reward obtained under unknown transitions.
\end{remark}
We summarize the model-based learning protocol used by \algnmespace in \Cref{alg:learning_protocol}. 
The first step computes constants $C_{n,t}$ (see \Cref{apx:gp_upper_bound}) introduced in \Cref{lemma:hallucinated_safety}. The second step optimizes the objective in \Cref{eqn:calibrated_safe}. The third and fourth steps collect trajectories from the representative agent and update the calibrated model. 
While the learning protocol is model-based, the subroutine in Line 3 can use either model-based or model-free algorithms proposed for the MFC due to our reformulation in \Cref{eqn:calibrated_safe}. In \Cref{apx:mf-algorithms}, we introduce modifications of well-known algorithms for optimizing the mean-field setting. \looseness=-1

\section{Experiments} \label{sec:experiments}
In this section, we demonstrate the performance of \algnmespace \newline on the swarm motion benchmark and showcase that it can tackle the real-world large-scale vehicle repositioning problem faced by ride-hailing platforms.\looseness=-1

\begin{figure*}[!hbt!]
\centering
\begin{subfigure}[t]{0.32\textwidth}
    \centering
    \includegraphics[width=\textwidth]{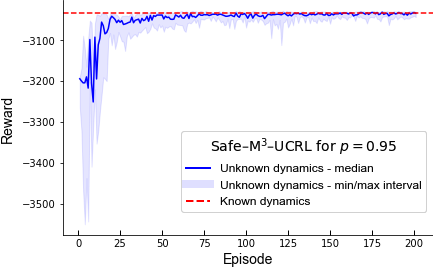}
    \captionsetup{justification=centering}
    \vspace{-0.5cm}
    \caption{Swarm motion learning curve}
    \label{fig:swarm_learning_curve}
\end{subfigure}
\hfill
\begin{subfigure}[t]{0.32\textwidth}
    \centering
    \includegraphics[width=\textwidth]{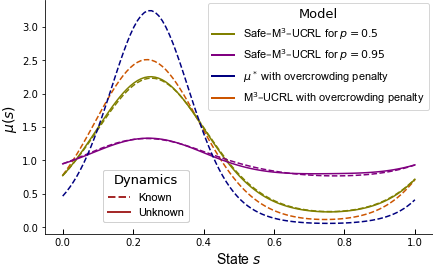}
    \captionsetup{justification=centering}
    \vspace{-0.5cm}
    \caption{Swarm motion mean-field distributions}
    \label{fig:swarm_distributions}
\end{subfigure}
\hfill
\begin{subfigure}[t]{0.32\textwidth}
    \centering
    \includegraphics[width=\textwidth]{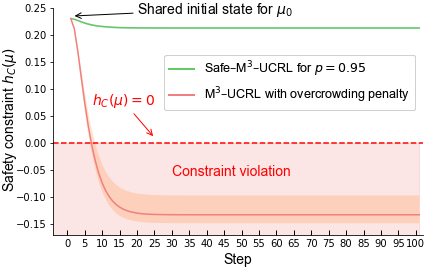}
    \captionsetup{justification=centering}
    \vspace{-0.5cm}
    \caption{Swarm motion safety for $p = 0.95$}
    \label{fig:swarm_constraint_violations}
\end{subfigure}

\begin{subfigure}[t]{0.32\textwidth}
    \centering
    \includegraphics[width=\textwidth]{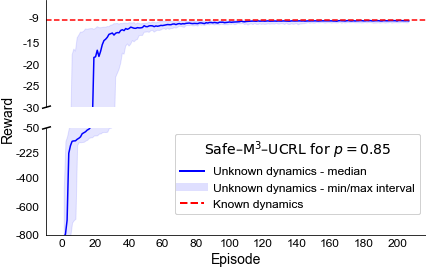}
    \captionsetup{justification=centering}
    \vspace{-0.5cm}
    \caption{Vehicle repositioning learning curve}
    \label{fig:repositioning_learning_curve}
\end{subfigure}
\hfill
\begin{subfigure}[t]{0.32\textwidth}
    \centering
    \includegraphics[width=\textwidth]{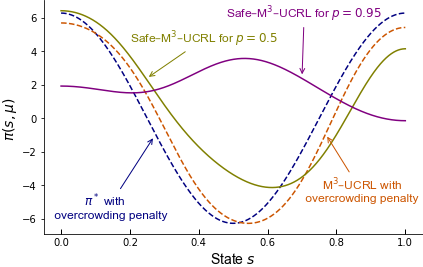}
    \captionsetup{justification=centering}
    \vspace{-0.5cm}
    \caption{Swarm motion policies}
    \label{fig:swarm_policies}
\end{subfigure}
\hfill
\begin{subfigure}[t]{0.32\textwidth}
    \centering
    \includegraphics[width=\textwidth]{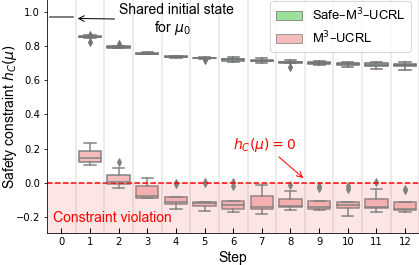}
    \captionsetup{justification=centering}
    \vspace{-0.5cm}
    \caption{Vehicle repositioning safety for $p = 0.85$}
    \label{fig:repositioning_constraint_violations}
\end{subfigure}
\vspace{-0.3cm}
\caption{Performance analysis of \algnmespace for swarm motion and vehicle repositioning. The policy and statistical model were trained on 10 randomly initialized neural networks for each hyperparameter $p$.}
\label{fig:performance_analysis}
\Description{Performance analysis of the proposed algorithm for swarm motion and vehicle repositioning. The policies and statistical model were trained on 10 randomly initialized neural networks for each hyperparameter $p$ representing the proportion of maximum entropy used as a constraint. The figure displays learning curves, mean-field distributions, safety violation graphs, and policies.}
\end{figure*}

\subsection{Swarm Motion} \label{sec:swarm_motion}
Due to the infancy of Mean-Field RL as a research topic, one of the rare benchmarks used by multiple authors is the swarm motion. \cite{pasztor2021efficient,carmona2019model} view it as Mean-Field RL problem, while~\cite{elie2020convergence} uses it in the context of MFGs. In this setting, an infinite population of agents is moving around toroidal state space with the aim of maximizing a location-dependent reward function while avoiding congested areas~\cite{almulla2017two}. \looseness=-1

\textbf{Modeling.} We model the state space $\s$ as the unit torus on the interval $[0,1]$, and the action space is the interval $\A {}={}[-7, 7]$. We approximate the continuous-time swarm motion by partitioning unit time into $T=100$ equal steps of length $\Delta t {}={}1{}/T$. The next state $s_{n,t{}+{}1} {}={}f(z_{n,t}) {}+{} \epsilon_{n,t}$ is induced by the unknown transitions $f(z_{n,t}) {}={}s_{n,t} {}+{} a_{n,t}\Delta t$ with $\epsilon_{n,t} \sim \text{N}(0, \Delta t)$ for all episodes $n$ and steps $t$. The reward function is defined by $r(z_{n,t}) {}={}\phi(s_{n,t}) - \frac{1}{2}a_{n,t}^2 - \log(\mu_{n,t})$, where the first term $\phi(s) {}={}2 \pi^2 (\sin(2\pi s) - \cos^2(2\pi s)) {}+{} 2\sin(2\pi s)$ determines the positional reward received at the state $s$ (see \Cref{apx:swarm_evaluation}), the second term defines the kinetic energy penalizing large actions, and the last term penalizes overcrowding. Note that the optimal solution for continuous time setting, $\Delta t \rightarrow 0$ can be obtained analytically~\cite{almulla2017two} (see \Cref{apx:swarm_modeling}) and used as a benchmark. \cite{pasztor2021efficient,carmona2019model} show that Mean-Field RL discrete-time, $\Delta t$ > 0, methods can learn good approximations of the optimal solution. The disadvantage of these methods is that they can influence the skewness of the mean-field distribution only via overcrowding penalty. Therefore, to control skewness, their only option is to introduce a hyperparameter to the reward that regulates the level of overcrowding penalization. On the other hand, \algnmespace controls skewness without trial-and-error reward shaping by imposing the entropic safety constraint $h_C(\mu_{n,t}) {}={}H(\mu) - C \geq 0$, with $H(\cdot)$ defined in \Cref{eqn:differential_entropy}, instead of having the overcrowding penalty term $\log(\mu_{n,t})$. Since higher entropy translates into less overcrowding, we can upfront determine and upper-bound the acceptable level of overcrowding by setting a desirable threshold $C$. 

We use a neural network to parametrize the policy profile $\bpi_n=(\pi_{n,0}, \ldots, \pi_{n,T-1})$, for every episode $n$, during the optimization of \Cref{eqn:calibrated_safe}. The optimization is done by \textit{Mean-Field Back-Propagation Through Time} (MF-BPTT) (see \Cref{apx:mf-bptt}). In our experiments, a single neural network shows enough predictive power to represent the whole policy profile, but using $T$ networks, one for each individual policy $\pi_{n,t}$, $t=0,\dots,T-1$, is a natural extension. We use a \textit{Probabilistic Neural Network Ensemble}~\cite{chua2018deep,lakshminarayanan2017simple} to represent a statistical model of transitions, which we elaborate in \Cref{apx:ensemble}.
We represent the mean-field distribution by discretizing the state space uniformly and assigning the probability of the representative agent residing within each interval.
We set the safety threshold $C$ as a proportion $p \in [0,1]$ of the maximum entropy, i.e., $C{}={}p\max_{\pp(\s)}H(\mu)$.
Note that \algnmespace guarantees safe mean-field distributions only if the initial mean-field distributions $\mu_{n,0}$ at time $t=0$ are safe for every episode $n$ for given threshold $C$. A generic way for safe initialization is setting $\mu_{n,0}$ as the maximum entropy distribution among all safe distributions $\zeta$ \looseness=-1
\begin{equation} \label{eq:mu_initialization}
    \mu_{n,0} = \argmax_{\mu\in\zeta}H(\mu).
\end{equation}
Note that, in general, the safe initial distribution might not exist.
We provide the implementation details in \Cref{apx:vehicle_modeling}. \looseness=-1

\textbf{Results.}
In \Cref{fig:swarm_learning_curve}, we observe the learning curve of \algnmespace for $p=0.95$ for 10 randomly initialized runs. The learning process is volatile in the initial phase due to the high epistemic uncertainty, but after 50 episodes, all policies converge toward the solution as if the transitions were known. In \Cref{fig:swarm_distributions}, we use various thresholds $C$ to show that the entropic constraint effectively influences the degree of agents' greediness to collect the highest positional reward. By increasing $p$ towards 1, we force agents to put increasingly more emphasis on global welfare rather than on individual rewards. We see that for $p=0.5$ we obtain a distribution that matches the distribution obtained by the unconstrained \ucrlspace~\cite{pasztor2021efficient} that relies on the overcrowding penalty, while for $p=0.95$ we significantly surpass the effect that the penalty has on the distribution's skewness. We also show that the discrete-time solutions with low $p$ serve as a good approximation of the continuous-time optimal distribution $\mu^*$. Furthermore, we observe that the policies under unknown transitions overlap with the solutions learned under known transitions. In \Cref{fig:swarm_policies}, we observe that unconstrained policies and policies with low $p$ push agents towards high individual rewards. Note that for states close to 1, the algorithms learn it requires less kinetic energy to push agents over the border due to the toroidal shape of the state space. For high $p$'s learned policies push agents always in the same direction to maintain uniformity of the system. Importantly, \Cref{fig:swarm_constraint_violations} shows that \algnmespace for $p=0.95$ keeps the mean-field distribution safe throughout the entire execution, unlike the solutions that rely on the overcrowding penalty term. For further details, see \Cref{apx:swarm_evaluation}. \looseness=-1

\begin{figure*}[hbt!]
\centering
\begin{subfigure}[t]{0.232\textwidth}
    \centering
    \includegraphics[width=\textwidth]{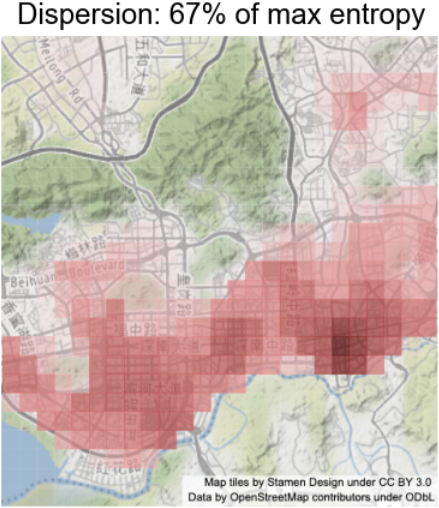}
    \captionsetup{justification=centering}
    \caption{\\Observed demand $\rho_0$ used as a target distribution}
    \label{fig:target_mu}
\end{subfigure}
\hfill
\begin{subfigure}[t]{0.232\textwidth}
    \centering
    \includegraphics[width=\textwidth]{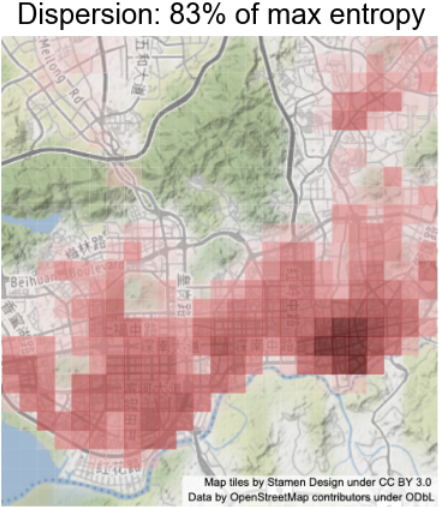}
    \captionsetup{justification=centering}
    \caption{\\Unconstrained \ucrlspace under known transitions}
    \label{fig:ucrl_mu}
\end{subfigure}
\hfill
\begin{subfigure}[t]{0.232\textwidth}
    \centering
    \includegraphics[width=\textwidth]{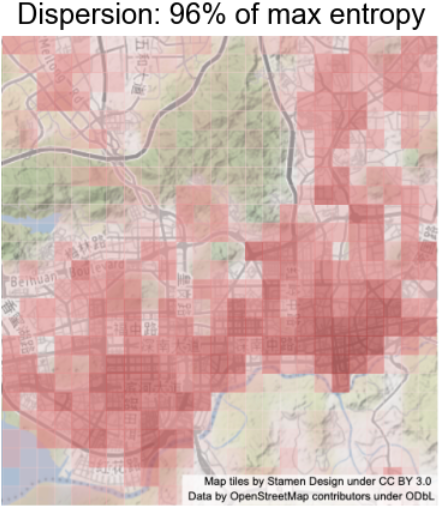}
    \captionsetup{justification=centering}
    \caption{\\\algnmespace under known transitions for $p=0.85$}
    \label{fig:safe_ucrl_mu_known}
\end{subfigure}
\hfill
\begin{subfigure}[t]{0.232\textwidth}
    \centering
    \includegraphics[width=\textwidth]{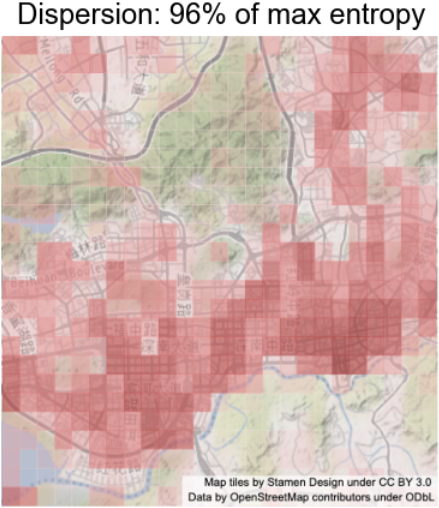}
    \captionsetup{justification=centering} 
    \caption{\\\algnmespace under unknown transitions for $p=0.85$}
    \label{fig:safe_ucrl_mu_unknown}
\end{subfigure}
\hfill
\begin{subfigure}[b]{0.05\textwidth}
    \centering
    \includegraphics[height=4.7cm]{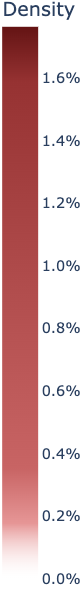}
\end{subfigure}
\vspace{-0.2cm}
\caption{\algnmespace guided vehicle distribution in Shenzhen in the evening peak hours.} \label{fig:repositioning_mu_comparison}
\Description{The proposed algorithm guided vehicle distribution in Shenzhen in the evening peak hours. The image shows heatmaps illustrating the spatial distribution of vehicles.}
\end{figure*} 

\subsection{Vehicle Repositioning Problem} \label{sec:vehicle_repositioning}
Since ride-hailing services, such as Uber, Lyft, and Bolt, gained popularity and market share, vehicle repositioning has been a long-standing challenge for these platforms, i.e., moving idle vehicles to areas with high-demand potential. A similar challenge is present in bike-sharing services accessible in many cities and, as of more recently, dockless electric scooter-sharing services such as Bird and Lime.
In the competitive environment, the operator significantly increases profit by successfully repositioning idle vehicles to high-demand areas. Nevertheless, there might exist regulations imposed by the countries or cities that enforce service providers to either guarantee fair service accessibility or restrict the number of vehicles in districts with high traffic density. Such restrictions prevent operators from greedily maximizing the profit and can be encapsulated by some dispersion metric such as entropy. 
Solving this problem helps prevent prolonged vehicle cruising and extensive passenger waiting times in the demand hotspots, increasing the service provider's efficiency and reducing its carbon footprint.
Existing approaches to vehicle repositioning range from static optimization over a queuing network~\cite{zhang2016control,braverman2019empty}, model predictive control~\cite{iglesias2018data}, to RL~\cite{wen2017rebalancing,lin2018efficient,mao2020dispatch}. 
The main advantage of \algnmespace is the capability of controlling a \textit{large} fleet of homogeneous vehicles and enforcing efficient coordination to match the spatiotemporal demand distribution. Additionally, the safety constraint introduced into the model guarantees service accessibility by ensuring idle vehicles are spreading over the study region. Although accessibility has not been widely discussed in the literature on vehicle repositioning, it is expected to be an important fairness constraint when shared mobility services become a prevailing travel mode~\cite{shaheen2017travel}. \looseness=-1

\textbf{Modeling.} 
Ride-hailing operations can be modeled as sequential decision-making, which consists of passenger trips followed by repositioning trips operated by a central controller as illustrated in \Cref{fig:trajectory}. We assume that the controller has access to the locations of vehicles in its fleet and communicates the real-time repositioning actions to the drivers via electronic devices. Nevertheless, since the fleet is operating in a noisy traffic environment, repositioning usually cannot be executed perfectly. We assume that vehicles can move freely within the area of our interest, which is represented by a two-dimensional unit square, i.e., the state-space $\s {}={}[0,1]^2$, and repositioning actions are taken from $\A {}={}[-1,1]^2$.
The objective of our model is to satisfy the demand in the central district of Shenzhen while providing service accessibility in the wider city center. We restrict our modeling horizon to three evening peak hours, which are discretized in fifteen-minute operational intervals, i.e., $T=12$, and each episode $n$ represents one day (for more details, see \Cref{apx:vehicle_repositioning}).
We model service providers goal of maximizing the coverage of the demand by the negative of the Kullback-Leibler divergence between the vehicles' distribution $\mu_{n,t}$ and demand for service denoted as $\rho_0$, i.e., $r(z_{n,t}) {}={}-D_{KL}(\rho_0 || \mu_{n,t})$. 
In particular, the demand distribution $\rho_0 \in \pp(\s)$ represents a probability of a trip originating in the infinitesimal neighborhood of state $s\in\s$ during peak hours (see \Cref{fig:target_mu}). We estimate a stationary demand distribution $\rho_0$ from the vehicle trajectories collected in Shenzhen, China, in 2016. 
If the passenger's trip originates at $s\in\s$ the likelihood of its destinations is defined by the mapping $\Phi:\s\rightarrow\pp(\s)$, which we fit from the trip trajectories (see \Cref{apx:vehicle_preprocessing}). We use $\Phi(\cdot)$ to define sequential transitions by first executing passenger trips followed by vehicle repositioning. Formally, the next state $s_{n,t{}+{}1} {}={}f(z_{n,t}) {}+{} \epsilon_{n,t}$ is induced by the unknown transitions $f(z_{n,t}) {}={}\text{clip}(s^\Phi_{n,t} {}+{} a_{n,t}, 0, 1)$, where $s^\Phi_{n,t}\sim\Phi(s_{n,t})$ and $\epsilon_{n,t}\sim \text{TN}(0, \sigma^2 I_2)$ is a Gaussian with a known variance $\sigma^2$ truncated at the borders of $\s$ and $I_2$ is the $2\times 2$ unit matrix. Notice that the controller determines repositioning actions given intermediate states $s^\Phi_{n,t}$ obtained after executing passenger trips. We use entropic safety constraint $h_C(\mu_{n,t}) {}={}H(\mu) - C \geq 0$ to enforce the service accessibility across all residential areas (see \Crefrange{fig:safe_ucrl_mu_known}{fig:safe_ucrl_mu_unknown}). Therefore, the optimization objective in \Cref{eqn:calibrated_safe} trades off between greedily satisfying the demand $\rho_0$ and adhering to accessibility constraint imposed by $h_C(\cdot)$. Identically to the swarm motion experiment, we use a neural network to parametrize the policy profile $\bpi_n$, which we optimize by MF-BPTT. A statistical model of the transitions is represented by a Probabilistic Neural Network Ensemble, while $\mu_{n,0}$ is initialized using \Cref{eq:mu_initialization}. We represent the mean-field distribution by discretizing the state space into the uniform grid as elaborated in \Cref{apx:vehicle_preprocessing}.

\textbf{Results.} 
The entropy of the target distribution, $\rho_0$ in \Cref{fig:target_mu}, already achieves $p=0.67$ of the maximum due to a wide horizontal spread. To achieve vertical spread, we require an additional 18 percentage points of entropy as a safety constraint. Concretely, we use $p=0.85$ to set the threshold as the proportion of maximum entropy and proceed by optimizing the policy profile in \Cref{eqn:calibrated_safe}. Due to the lack of an analytical solution for \Cref{eqn:calibrated_safe}, we use a policy profile trained under known transitions as a benchmark. We observe that the learned policy profile $\bpi_n^*$ converges to the policy profile under known transitions in $n=80$ episodes. \Cref{fig:repositioning_learning_curve} shows two phases of the learning process. During the first 60 episodes, the performance is volatile, but once the epistemic uncertainty around true transitions is tight, the model exploits it rapidly by episode 80. 
\looseness=-1

In \Cref{fig:repositioning_constraint_violations}, we empirically show that \algnmespace satisfies safety constraints during the entire execution. In \Cref{fig:repositioning_mu_comparison}, we use a city map of Shenzhen to show that \algnmespace improves service accessibility in low-demand areas. \Cref{fig:ucrl_mu} shows that \ucrlspace under known transitions learns how to satisfy the demand $\rho_0$ effectively at the cost of violating safety constraint (see \Cref{fig:repositioning_constraint_violations}). \Cref{fig:safe_ucrl_mu_known} shows that \algnmespace under known transitions improves safety by distributing vehicles to residential areas in the northwest and northeast.
Finally, \Cref{fig:safe_ucrl_mu_unknown} emphasizes the capability of \algnmespace to learn complex transitions while the policy profile $\bpi_n^*$ simultaneously converges towards the results achieved under known transitions with the number of episodes $n$ passed.\looseness=-1

In \Cref{apx:vehicle_learning_protocol}, we provide the details on the parameters used during the training and exhaustive performance analysis. The results in this section are generated assuming the infinite regime. At the same time, in \Cref{apx:vehicle_evaluation_finite_regime}, we showcase that in the finite regime the policy profile $\bpi_n^*$ can be successfully applied to millions of individual agents in real-time, which might be of particular importance to real-world practitioners. The code we use to train and evaluate \algnmespace is available in our GitHub repository~\cite{Jusup_Safe_Model-Based_Multi-Agent_2023}. 

\section{Conclusion} \label{sec:conclusion} 
We present a novel formulation of the mean-field model-based reinforcement learning problem incorporating safety constraints. \algnmespace addresses this problem by leveraging epistemic uncertainty under an unknown transition model and employing a log-barrier approach to ensure conservative satisfaction of the constraints. Beyond the synthetic swarm motion experiment, we showcase the potential of our algorithm for real-world applications by effectively matching the demand distribution in a shared mobility service while consistently upholding service accessibility. In the future, we believe that integrating safety considerations in intelligent multi-agent systems will have a crucial impact on various applications, such as autonomous ride-hailing, firefighting robots and drone/robot search-and-rescue operations in complex and confined spaces.

\balance



\begin{acks}
The authors would like to thank Mojm\'ir Mutn\'y for the fruitful discussions during the course of this work. Matej Jusup acknowledges support from the Swiss National Science Foundation under the research project DADA/181210. This publication was made possible by an ETH AI Center doctoral fellowship to Barna Pasztor. Francesco Corman acknowledges Grant 2023-FS-331 for Research in the area of Public Transport. Andreas Krause acknowledges that this research was supported by the European Research Council (ERC) under the European Union’s Horizon 2020 research and innovation program grant agreement no. 815943 and the Swiss National Science Foundation under NCCR Automation, grant agreement 51NF40 180545. Ilija Bogunovic acknowledges that the work was supported in part by the EPSRC New Investigator Award EP/X03917X/1.
The authors would also like to thank the Shenzhen Urban Transport Planning Center for collecting and sharing the data used in this work. \looseness=-1
\end{acks}



\bibliographystyle{ACM-Reference-Format} 
\bibliography{bibliography}


\appendix

\twocolumn[
  \begin{@twocolumnfalse}
    \centering
    {\fontsize{32}{36}\selectfont Appendix} \\
    \vspace{0.5cm}
    {\fontsize{18}{22}\selectfont Safe Model-Based Multi-Agent \\ Mean-Field Reinforcement Learning} \\
    \vspace{1cm}
  \end{@twocolumnfalse}
]

\section{Relationship Between Mean-Field Distributions under True and Estimated Transitions} \label{apx:gp_upper_bound}

In this section, we describe the procedure for computing constant $C_{n,t}$ (defined in \Cref{sec:proposed_method}) at every episode $n {}={}1,\ldots,N$ and step $t {}={}1,\ldots,T$. This is necessary for establishing the connection between the mean-field distribution $\mu_{n,t}$ in the original system under known transitions $f$ (see \Cref{eqn:safe-mf-rl}) and the mean-field distribution $\tmu_{n,t}$ in the system induced by a calibrated statistical model $\tf_{n-1}$ (see \Cref{eqn:calibrated_short}, \Cref{sec:statistical_model} and \Cref{asm:calibrated_model}). In particular, \Cref{cor:safety_guarantee} shows that the safety in the original system $h_C(\mu_{n,t})$ is guaranteed with high probability if $h_C(\tmu_{n,t}) {}\geq{} L_h C_{n,t}$ for a safety constant $C_{n,t} {}\geq{} W_1(\tmu_{n,t}, \mu_{n,t})$ and Lipschitz constant $L_h$ introduced in \Cref{asm:safety_lipschitz}. 
We use the following result from~\cite[Lemma 5]{pasztor2021efficient} to make a connection between the mean-field distributions under true and estimated transitions. \looseness=-1
\begin{lemma}
\label{lemma:wasserstein}
Under \Crefrange{asm:transition_lipschitz}{asm:calibrated_lipschitz} and assuming that the event in \Cref{asm:calibrated_model} holds true, for episodes $n {}={}1,\ldots,N$, steps $t {}={}1,\ldots,T$  and fixed policy profile $\bpi_n {}={}(\pi_{n,0}, \ldots, \pi_{n,T-1})$, we have:
\begin{equation*} \label{eqn:uncertainty}
W_1(\tmu_{n,t}, \mu_{n,t}) {}\leq{} 2 \beta_{n-1} \overline{L}_{n-1}^{t-1} \sum_{i=0}^{t-1} I_{n,t},
\end{equation*}
where $I_{n,t} {}={}\int_{\s} \| \bsigma_{n-1}(s,\mu_{n, i}, \pi_{n,t}(s, \mu_{n,i}))\|_2 \mu_{n,i}(ds)$ and $\overline{L}_{n-1} {}={}1 {}+{} 2(1 {}+{} L_{\pi})(L_f {}+{} 2\beta_{n-1}L_{\sigma})$.
\end{lemma}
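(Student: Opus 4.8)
The plan is to prove the bound by establishing a one-step recursion for $w_t := W_1(\tmu_{n,t}, \mu_{n,t})$ and then unrolling it. Both families of distributions are generated by the same mean-field operator $U$ from \Cref{eqn:mf_transition}, applied with the same policy profile $\bpi_n$ and differing only in the transition map (true $f$ versus calibrated $\tf_{n-1}$); since the protocol initializes $\tmu_{n,0} = \mu_{n,0} = \mu_0$, we have the base case $w_0 = 0$. The goal is a recursion of the form $w_{t+1} \le \overline{L}_{n-1}\, w_t + 2\beta_{n-1} I_{n,t}$, after which unrolling from $w_0 = 0$ gives $w_t \le 2\beta_{n-1}\sum_{i=0}^{t-1}\overline{L}_{n-1}^{\,t-1-i} I_{n,i}$, and bounding each power crudely by $\overline{L}_{n-1}^{\,t-1}$ (legitimate because $\overline{L}_{n-1} \ge 1$) yields the stated inequality.

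For the one-step bound I would represent $\mu_{n,t+1}$ as the pushforward of $\mu_{n,t}$ under the map $F^\mu(s) := f(s, \mu_{n,t}, \pi_{n,t}(s, \mu_{n,t}))$ followed by convolution with the known-variance noise, and analogously $\tmu_{n,t+1}$ as the pushforward of $\tmu_{n,t}$ under $\tilde F(s) := \tf_{n-1}(s, \tmu_{n,t}, \pi_{n,t}(s, \tmu_{n,t}))$. Because the noise is common and independent of everything else, coupling it to itself shows it contributes nothing to $W_1$, so it suffices to compare the two deterministic pushforwards. I would insert the intermediate distribution obtained by pushing $\mu_{n,t}$ forward under $\tf_{n-1}$ and split via the triangle inequality into (i) a \emph{model-error} term, in which the input distribution $\mu_{n,t}$ is held fixed and only the map changes from $f$ to $\tf_{n-1}$, and (ii) a \emph{propagation} term, in which the dynamics are fixed to $\tf_{n-1}$ and the input changes from $\mu_{n,t}$ to $\tmu_{n,t}$.

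Term (i) is controlled pointwise by calibration: by \Cref{eqn:hallucinated_transitions} and \Cref{asm:calibrated_model}, both $\tf_{n-1}$ and $f$ lie within $\beta_{n-1}\bsigma_{n-1}$ of $\bM_{n-1}$, so $\|\tf_{n-1}(z) - f(z)\|_2 \le 2\beta_{n-1}\|\bsigma_{n-1}(z)\|_2$; integrating this against the common input $\mu_{n,t}$ produces exactly $2\beta_{n-1} I_{n,t}$, which is precisely why the confidence integral is taken with respect to the true distribution $\mu_{n,i}$ rather than $\tmu_{n,i}$. For term (ii) I would take an optimal $W_1$-coupling $\gamma$ of $\tmu_{n,t}$ and $\mu_{n,t}$, use it to couple the two pushforwards, and bound the integrand through the Lipschitz continuity of the $\tf_{n-1}$-dynamics: \Cref{asm:policy_lipschitz} converts a change in $(s,\mu)$ into a factor $(1+L_\pi)$ multiplying $\|s-s'\|_2 + W_1(\tmu_{n,t},\mu_{n,t})$, while the Lipschitz constant of $\tf_{n-1}$ is bounded by combining \Cref{asm:transition_lipschitz} (contributing $L_f$) with \Cref{asm:calibrated_lipschitz} applied to the $\beta_{n-1}\bSigma_{n-1}\eta$ term (contributing $2\beta_{n-1} L_\sigma$). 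Integrating against $\gamma$, so that $\int \|s-s'\|_2\, d\gamma \le W_1(\tmu_{n,t},\mu_{n,t})$, collects these into a multiplicative factor that can be bounded by $\overline{L}_{n-1} = 1 + 2(1+L_\pi)(L_f + 2\beta_{n-1} L_\sigma)$ acting on $w_t$.

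The main obstacle is term (ii): controlling the Lipschitz behavior of the estimated transition $\tf_{n-1}$ cleanly, since the hallucination term $\beta_{n-1}\bSigma_{n-1}\eta$ carries the adversarial $\eta \in [-1,1]^p$, while simultaneously tracking the two distinct channels through which the input distribution enters the dynamics — directly as the mean-field argument of $\tf_{n-1}$, and indirectly through $\pi_{n,t}$. Careful bookkeeping of these dependencies, together with the norm conversions between the $\|\cdot\|_1$ appearing in $W_1$ and the $\|\cdot\|_2$ used in the Lipschitz assumptions, is what pins down the precise form of $\overline{L}_{n-1}$; the remaining steps, namely unrolling the recursion and applying the crude power bound, are routine.
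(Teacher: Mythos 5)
The paper itself contains no proof of this lemma: it is imported verbatim from \cite[Lemma 5]{pasztor2021efficient}, so the only benchmark is the argument in that reference. Your plan reproduces its structure faithfully — base case $\tmu_{n,0}=\mu_{n,0}$, a one-step recursion $w_{t+1}\le \overline{L}_{n-1}w_t + 2\beta_{n-1}I_{n,t}$ obtained by inserting the intermediate pushforward and splitting into a model-error term (same input $\mu_{n,t}$, maps $f$ versus $\tf_{n-1}$) and a propagation term (same map, inputs $\mu_{n,t}$ versus $\tmu_{n,t}$), the observation that the common additive noise can be coupled to itself, unrolling, and the crude bound $\overline{L}_{n-1}^{\,t-1-i}\le\overline{L}_{n-1}^{\,t-1}$ (valid since $\overline{L}_{n-1}\ge 1$). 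Your handling of the model-error term, and your explanation of why the uncertainty integral is taken against the true $\mu_{n,i}$, are both correct. (Incidentally, the summand in the stated bound should read $I_{n,i}$, not $I_{n,t}$, as your unrolled form makes clear.)

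The one step that would fail as literally written is the propagation term: you propose to bound the Lipschitz constant of $\tf_{n-1}$ by $L_f + 2\beta_{n-1}L_\sigma$, with the second contribution coming from ``\Cref{asm:calibrated_lipschitz} applied to the $\beta_{n-1}\bSigma_{n-1}\eta$ term.'' But $\eta:\zz\to[-1,1]^p$ is an arbitrary (in particular, non-Lipschitz) function, so $\bSigma_{n-1}(\cdot)\eta(\cdot)$ — and hence $\tf_{n-1}$ — need not be Lipschitz at all; the cross term $\bsigma_{n-1}(z')(\eta(z)-\eta(z'))$ is not controlled by $d(z,z')$. The repair is to detour through the true dynamics and the calibration band:
\begin{align*}
\|\tf_{n-1}(z)-\tf_{n-1}(z')\|_2
&\le \|\tf_{n-1}(z)-f(z)\|_2 + \|f(z)-f(z')\|_2 + \|f(z')-\tf_{n-1}(z')\|_2 \\
&\le L_f\, d(z,z') + 2\beta_{n-1}\bigl(\|\bsigma_{n-1}(z)\|_2+\|\bsigma_{n-1}(z')\|_2\bigr),
\end{align*}
and only then invoke \Cref{asm:calibrated_lipschitz} to convert $\|\bsigma_{n-1}(z')\|_2\le\|\bsigma_{n-1}(z)\|_2+L_\sigma d(z,z')$. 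This yields the multiplicative coefficient $L_f+2\beta_{n-1}L_\sigma$ you want \emph{plus} an additional additive $\bsigma_{n-1}$ term that must be folded back into the $I_{n,t}$ integral; that leakage is precisely where the remaining factors of $2$ in $\overline{L}_{n-1}$ and in the coefficient of the sum come from. With that correction, the bookkeeping you defer does close and your argument goes through.
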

Let $K_{n,t}{}:={}2\beta_{n-1} \overline{L}_{n-1}^{t-1}$ and $z {}={}(s, \mu, \pi_{n,t}(s, \mu))$. Then we have:
\begin{align*}
    W_1(\tmu_{n,t}, \mu_{n,t}) 
    &{}\leq{} K_{n,t}\sum_{i=0}^{t-1} \int_{\s} \| \bsigma_{n-1}(s,\mu_{n, i}, \pi_{n,t}(s, \mu_{n,i}))\|_2 \mu_{n,i}(ds) \\ 
    &{}\leq{} tK_{n,t}\max_{z\in \zz} \| \bsigma_{n-1}(z) \|_2, 
\end{align*}
for $t=1,\ldots,T$, where the first inequality is due to \Cref{lemma:wasserstein} while the second one follows since maximum upper bounds expectation. Hence, we can set $C_{n,t}$ as $tK_{n,t}\max_{z \in\zz} \| \bsigma_{n-1}(z) \|_2$, and calculate $\max_{z \in\zz} \| \bsigma_{n-1}(z) \|_2$ once at the beginning of each episode $n$. Notice that our learning protocol (see \Cref{alg:learning_protocol}) does not require computing $C_{n,0}$ at the initial step $t=0$ since the mean-field distributions share the initial state, i.e., $\tmu_{n,0}=\mu_{n,0}$. It is worth noting that in certain models, such as Gaussian Processes, this upper bound provides a meaningful interpretation. Specifically, $\bsigma_{n-1}(z)$ represents the epistemic uncertainty of the model, which tends to decrease monotonically as more data is observed. \looseness=-1

\section{Examples of Safety Constraints}\label{apx:proof_useful_constraints}
In this section, we show some important classes of safety constraints $h_C(\cdot) \geq 0$ satisfying \Cref{asm:safety_lipschitz}. 

\subsection{Entropic Safety} \label{apx:entropic_safety}
Entropic safety serves the purpose of controlling the dispersion of the mean-field distribution, which is useful in many applications, such as vehicle repositioning (see \Cref{sec:experiments}). A natural way of defining entropic safety is via differential entropy, but in general, the differential entropy is not Lipschitz continuous due to the unboundedness of the natural logarithm. Nevertheless, the issue can be easily circumvented by considering $\epsilon$-smoothed differential entropy $H^\epsilon:\pp(\s)\rightarrow \R_{\geq 0}$. For $\epsilon > 0$ and $C \geq 0$ we define $\epsilon$-smoothed differential entropy and associated entropic safety constraint $H_C^\epsilon(\cdot)$ as \looseness=-1
\begin{align*}
    H^\epsilon(\mu) &{}:={}-\int_{\s}\log(\mu(s) {}+{} \epsilon)\mu(ds) \\
    H^\epsilon_C(\mu) &{}:={}H^\epsilon(\mu) - C
\end{align*}
To show that $H^\epsilon_C(\cdot)$ satisfies \Cref{asm:safety_lipschitz} let $h_C(\cdot) {}:={}H^\epsilon_C(\cdot)$ and assume that $\s\subset\R^p$ is a compact set. First, note that $f(x) {}={}\log(x{}+{}\epsilon)$ is $\frac{1}{\epsilon}$-Lipschitz continuous for $\epsilon > 0$, i.e., $\epsilon f(x)$ is 1-Lipschitz. Second, for every $S\subseteq\s$ and $L$-Lipschitz function $f:\s\rightarrow\R$, a function $g(S)=\int_{\s}f(s)\mu(ds)$ is $L$-Lipschitz due to the boundedness of $f$. The following derivation shows that $H_C^\epsilon(\cdot)$ is $\frac{1}{\epsilon}$-Lipschitz continuous.
\begin{align*}
    &|h_C(\mu) - h_C(\mu')| 
    {}={}\left| H^\epsilon_C(\mu) - H^\epsilon_C(\mu')
     \right| \\
    ={}& \left|
        \int_{\s} \log(\mu'(s) {}+{} \epsilon)\mu'(ds) - \int_{\s} \log(\mu(s) {}+{} \epsilon)\mu(ds)
    \right| \\
    \begin{split}
    ={}& \left|
        \int_{\s} \log(\mu'(s) {}+{} \epsilon)(\mu' -\mu)(ds) - \int_{\s} \log(\mu(s) {}+{} \epsilon)\mu(ds) \right. \\
        &\left. {}+{} \int_{\s} \log(\mu'(s){}+{}\epsilon)\mu(ds)
        \right|
    \end{split} \\
    \begin{split}
    {}\leq{}{}& \left|\int_{\s} \log(\mu'(s) {}+{} \epsilon)(\mu' -\mu)(ds)\right| \\
    &{}+{} \left| \int_{\s} \log(\mu(s) {}+{} \epsilon)\mu(ds) - \int_{\s} \log(\mu'(s){}+{}\epsilon)\mu(ds)\right|
    \end{split} \\
    {}\leq{}{}&  \left|
        \int_{\s} \log(\mu'(s) {}+{} \epsilon)(\mu' -\mu)(ds)
    \right| 
    {}+{} \frac{1}{\epsilon} W_1(\mu, \mu') \\ 
    {}\leq{}{}& M \left|
        \int_{\s} (\mu' -\mu)(ds)
    \right| {}+{} \frac{1}{\epsilon} W_1(\mu, \mu') \\
    ={}& \underbrace{M(\mu' -\mu)(\s)}_{\overline{M}} {}+{} \frac{1}{\epsilon} W_1(\mu, \mu') \\
    ={}& \overline{M} {}+{} \frac{1}{\epsilon} W_1(\mu, \mu'),
\end{align*}
where the first inequality comes from the triangle inequality, the second inequality comes from the Lipschitz continuity of the $\epsilon$-smoothed logarithm and integral, the third comes from the upper-boundedness of logarithm on a compact set, and the last equality comes from the fact that the measure of a compact set is finite.
\begin{remark} \label{rmk:shannon_entropy}
    In \Cref{sec:experiments}, \Cref{apx:vehicle_repositioning} and \Cref{apx:swarm_motion}, we use the discrete equivalent of entropic safety, i.e., Shannon entropy, for our experiments because of the discrete representation of the mean-field distribution.~\cite[Proposition 8]{polyanskiy2016wasserstein} shows that Shannon entropy is Lipschitz continuous with respect to the scaled Wasserstein 1-distance, i.e., with respect to $\frac{1}{n}W_1(\cdot)$ known as Ornstein's distance. 
\end{remark}

\subsection{Distribution Similarity} \label{apx:distribution_similarity}
We can define safety by preventing the mean-field distribution $\mu$ from diverging from a prior distribution $\nu_0$ by quantifying the allowed dissimilarity between the two distributions
\begin{align*}
    h_{C}(\mu; \nu_0) &{}:={}C - D(\mu, \nu_0),
\end{align*}
where $C \geq 0$ and $D:\pp(\s)\times\pp(\s)\rightarrow\R_{\geq 0}$ is the distance function between probability measures. Commonly used distances are Wasserstein p-distance for $p{}\geq{} 1$ and $f$-divergences such as KL-divergence, Hellinger distance, and total variation distance.

A concrete example of a distance function $D(\cdot)$ that satisfies \Cref{asm:safety_lipschitz} is Wasserstein 1-distance
\begin{align*}
    W_1^C(\mu, \nu_0) &{}:={}C - W_1(\mu, \nu_0) \\
    h_C(\mu) &{}:={}W_1^C(\mu, \nu_0).
\end{align*}

\cite{clement2008elementary} shows the triangle inequality of Wasserstein p-distance for probability measures on separable metric spaces.~\cite{dudley2018real} shows that Wasserstein 1-distance induces a metric space $(\pp(\s), W_1)$ over probability measures. The result now trivially follows from the reverse triangle inequality
\begin{align*} \label{eqn:similarity_lipschitz}
    |h_C(\mu) - h_C(\mu')| 
    &{}={}|W_1^C(\mu, \nu_0) - W_1^C(\mu', \nu_0)| \\
    &{}={}|W_1(\mu, \nu_0) - W_1(\mu', \nu_0)| \\
    &{}\leq{} W_1(\mu, \mu').
\end{align*}

\begin{remark}
    Avoiding risky distributions can be modeled by setting $W_1^{C}(\mu, \nu_0) {}={}W_1(\mu, \nu_0) - C$ with the proof of \Cref{asm:safety_lipschitz} being equivalent to the above.
\end{remark}

\textbf{Weighted safety constraints.}\label{rmk:weighted_safety}
In applications that require emphasis on certain regions of the state space, we can generalize the above safety constraints by introducing the weight function $w:\s\rightarrow \R_{\geq 0}$. We extend \Cref{apx:entropic_safety} to weighted-differential entropy by defining
$$H^{w, \epsilon}(\mu) {}:={}-\int_{\s} w(s)\log(\mu(s) {}+{} \epsilon)\mu(ds)$$
and \Cref{apx:distribution_similarity} by considering weighted Wasserstein 1-distance
$$W_1^{w}(\mu, \nu_0) {}:={}\sup_{f:\text{Lip}(f){}\leq{} 1} \int_\s w(s)f(s)(\mu-\nu_0)(ds).$$
Here, we use Kantorovic-Rubinstein dual definition of Wasserstein 1-distance
$$W_1(\mu, \nu_0) {}:={}\sup_{f:\text{Lip}(f){}\leq{} 1} \int_\s f(s)(\mu-\nu_0)(ds),$$
where $f:\s\rightarrow\R$ is a continuous function and $\text{Lip}(f)$ denotes the minimal Lipschitz constant for $f$.

\section{Implementation Details} \label{apx:implementation_details}
In this section, we provide additional details on the practical implementation of \algnme. In particular, we discuss \textit{Probabilistic Neural Network Ensemble} model~\cite{chua2018deep,lakshminarayanan2017simple} to implement the statistical model from \Cref{sec:statistical_model} in \Cref{apx:ensemble}, the hallucinated control reparametrization from \Cref{sec:proposed_method} in more detail in \Cref{apx:hallucinated_control}, and optimization methods to solve \Cref{eqn:calibrated_safe} in \Cref{apx:mf-algorithms}

\subsection{Probabilistic Neural Network Ensemble Model of Transitions}\label{apx:ensemble}
As discussed in \Cref{sec:statistical_model}, we take a model-based approach to handling unknown transitions $f$.
The representative agent learns the \textit{Statistical Model} (see \Cref{sec:statistical_model}) of the transitions from the observed transitions $\cup_{i=1}^{n-1}\D_i$ at the beginning of each episode $n$, where $\D_i=\{(z_{i,t},s_{i,t{}+{}1})\}_{t=0}^{T-1}$ with $z_{i,t}=(s_{i,t},\mu_{i,t},a_{i,t})$. We use \textit{Probabilistic Neural Network Ensemble}~\cite{chua2018deep,lakshminarayanan2017simple} that consists of $K$ neural networks parametrized by $\theta_k$ for $k \in \{1, \ldots, K\}$ (the episode index should be clear from the context so we omit it for the notation simplicity). Each neural network $f_{\theta_k}$ returns a mean vector, $\bM_{\theta_k}(z) \in \s \subseteq \R^p$, and a covariance function $\bSigma_{\theta_k}(z) \in \R^{p \times p}$, that represents the aleatoric uncertainty. We further assume diagonal covariance functions. These outputs then form Gaussian distributions from which new states are sampled, i.e., $s_{t{}+{}1} \sim \mathcal{N}(\bM_{\theta_k}(z_t), \bSigma_{\theta_k}(z_t))$ for $t {}={}0,\ldots,T-1$. The models are trained with the negative log-likelihood loss function (NLL), $L(\theta) {}={}- \sum_{\D_{1:n-1}} \log\p(s_{t{}+{}1} | z_t)$ as described in~\cite{lakshminarayanan2017simple}.
The ensemble means, and the aleatoric and epistemic uncertainties are then estimated as follows
\begin{align*}
    \bM_{n-1}(\cdot) &{}={}\frac{1}{K} \sum_{k=1}^K \bM_{\theta_k}(\cdot) \\
    \bSigma_{n-1}^e(\cdot) &{}={}\frac{1}{K-1} \sum_{k=1}^K (\bM_{\theta_k}(\cdot) - \bM_{n-1}(\cdot))(\bM_{\theta_k}(\cdot) - \bM_{n-1}(\cdot))^\top \\
    \bSigma_{n-1}^a(\cdot) &{}={}\frac{1}{K} \sum_{k=1}^K \bSigma_{\theta_k}(\cdot),
\end{align*}
where $\bM_{n-1}(\cdot)$ is the ensemble prediction for the mean, while $\bSigma_{n-1}^a(\cdot)$ and $\bSigma_{n-1}^e(\cdot)$ denote the aleatoric and epistemic uncertainty estimates, respectively. Note that the epistemic uncertainty estimate $\bSigma_{n-1}^e(\cdot)$ is used to construct the calibrated model (see \Cref{sec:statistical_model} and \Cref{asm:calibrated_model}).

Even though Gaussian Processes (GPs) are proven to be calibrated under certain regularity assumptions~\cite{Abbasi-Yadkori2011RegretSystems, srinivas2009gaussian}, we chose probabilistic neural network ensemble model due to their better practical performance, i.e., scalability to higher dimensions and larger datasets. The disadvantage of the probabilistic neural network ensemble is that, unlike GPs, it does not guarantee the calibrated model (see \Cref{asm:calibrated_model}). Nevertheless, it can be recalibrated using one-step ahead predictions and temperature scaling as shown in~\cite{malik2019calibrated}. We note that in our experiment (see \Cref{sec:experiments}), such recalibration was not needed, and the above-defined mean and epistemic uncertainty outputs were sufficiently accurate for training \algnmespace (see \Cref{fig_apx:swarm_transitions}).

\subsection{Hallucinated Control Implementation Trick}\label{apx:hallucinated_control}
In \Cref{sec:proposed_method}, we introduced \algnme, a model that optimizes over the confidence set of transitions $\F_{n-1}$ and admissible policy profiles $\Pi$ (see \Cref{eqn:calibrated_short}). Unfortunately, optimizing directly over the function space is usually intractable since $\F_{n-1}$ is not convex, in general,~\cite{dani2008stochastic}. Thus, to make the optimization tractable, we describe a \textit{hallucinated control} trick, which leads to a practical reformulation (see \Cref{eqn:calibrated_safe}). The structure in $\F_{n-1}$ allows us to parametrize the problem and use gradient-based optimization to find a policy profile $\bpi_n^*$ at every episode $n$. Namely, we use the mean-field variant of an established approach known as \textit{Hallucinated Upper Confidence Reinforcement Learning} (H-UCRL)~\cite{curi2020efficient, moldovan2015optimism, pasztor2021efficient}. We introduce an auxiliary function $\eta:\zz \rightarrow [-1, 1]^p$, where $p$ is the dimensionality of the state space $\s$, to define hallucinated transitions \looseness=-1
\begin{equation} \label{eqn_apx:hallucinated_transitions}
    \tf_{n-1}(z) {}={}\bM_{n-1}(z) {}+{} \beta_{n-1}\bSigma_{n-1}(z)\eta(z), 
\end{equation}
where $\bM_{n-1}(\cdot)$ and $\bSigma_{n-1}(\cdot)$ are estimated from the past observations collected until the end of the previous episode $n-1$. Notice that $\tf_{n-1}$ is calibrated for any $\eta(\cdot)$ under \Cref{asm:calibrated_model}, i.e., $\tf_{n-1} \in \F_{n-1}$. \Cref{asm:calibrated_model} further guarantees that every function $\tf_{n-1}$ can be expressed in the auxiliary form in \Cref{eqn_apx:hallucinated_transitions}
\begin{equation*}
\begin{split}
    &\forall \tf_{n-1} \in \F_{n-1} \; \exists \eta:\zz \rightarrow [-1, 1]^p \; \text{such that} \\
    {}&\tf_{n-1}(z) {}={}\bM_{n-1}(z) {}+{} \beta_{n-1}\bSigma_{n-1}(z)\eta(z), \; \forall z \in \zz.
\end{split}
\end{equation*}
Furthermore, note that, for a fixed individual policy $\pi$, the auxiliary function $\eta(z) {}={}\eta(s, \mu, \pi(s, \mu)) {}={}\eta(s, \mu)$ has the same functional form as the policy $\pi$. This turns $\eta(\cdot)$ into a policy that exerts \textit{hallucinated control} over the epistemic uncertainty of the confidence set of transitions $\F_{n-1}$~\cite{curi2020efficient}. The reformulation of the optimization problem in \Cref{eqn:calibrated_safe} allows us to optimize over parametrizable functions (e.g., \textit{neural networks}) $\bpi$ and $\eta(\cdot)$ using gradient-based methods on the functions' parameters. Notice that the shared functional form of $\bpi$ and $\eta(\cdot)$ allows us to conveniently represent them by a single neural network. Further, note that the parametrization of $\eta(\cdot)$ must be sufficiently flexible not to restrict the space of $\tf_{n-1}$.
In \Cref{apx:mf-algorithms}, we provide several algorithms that can be used to solve this optimization problem.

\subsection{Optimization Methods -- MF-BPTT and MF-DDPG} \label{apx:mf-algorithms}

In this section, we describe two algorithms to solve the optimization problem in \Cref{eqn:calibrated_safe}. Namely, in \Cref{apx:mf-bptt}, we outline the key steps to apply the mean-field variant of the \textit{Back-Propagation-Through-Time} (BPTT) when a differentiable simulator is available and, in \Cref{apx:ddpg}, we describe the mean-field variant of the \textit{Deep Deterministic Policy Gradient} (DDPG)~\cite{lillicrap2015continuous} algorithm appropriate for non-differentiable simulators. \looseness=-1

\subsubsection{Mean-Field Back-Propagation-Through-Time (MF-BPTT)} \label{apx:mf-bptt}
\hfill\newline
Mean-Field Back-Propagation-Through-Time (MF-BPTT) assumes access to a differentiable simulator that returns a policy rollout given transition and policy functions. In each episode $n$, the representative agent initializes the policy profile $\bpi_n$, the auxiliary function $\eta(\cdot)$, and the estimated transitions $\tf_{n-1}$ using the mean $\bM_{n-1}(\cdot)$ and covariance $\bSigma_{n-1}(\cdot)$ functions according to \Cref{eqn:hallucinated_transitions}. Then, the representative agent repeatedly calls the simulator with inputs $\bpi_n$ and $\tf_{n-1}$ that returns the episode reward defined in \Cref{eqn:log-barrier-objective} as a differentiable object. After each policy rollout, a gradient ascent step is carried out on the parameters of $\bpi_n$ and $\eta(\cdot)$ before calling the simulator again. To simplify the notation, we overload the notation $\bpi_n^\psi$ with the combination of the two policy functions $\bpi_n$ and $\eta(\cdot)$, i.e., $\bpi_n^\psi {}={}(\bpi_n^\psi, \eta^\psi)$ where the superscript $\psi$ represent the parameters of both functions. We outline the described steps in \Cref{alg:mf_bptt}. During the optimization phase (Line 3 in \Cref{alg:learning_protocol}), we optimize both functions, $\bpi_n$ and $\eta(\cdot)$, jointly. However, during the execution (Line 4 in \Cref{alg:learning_protocol}), we only use the outputs corresponding to the policy profile $\bpi_n$.
The main distinction in \Cref{alg:mf_bptt} compared to traditional BPTT lies inside the simulator that has to simulate the mean-field distribution $\tmu_{n,t}$ for $t=1,\dots,T$ for each parameter update and calculate gradients with respect to this time dependency as well. More details on the implementation used for our experiments reported in \Cref{sec:experiments} are provided in \Cref{apx:vehicle_learning_protocol} and \Cref{apx:swarm_learning_protocol}. \looseness=-1

\begin{algorithm}[!t]
    \caption{Mean-Field Back-Propagation-Through-Time}
    \begin{algorithmic}[1]
        \Require Safety constraint $h_C(\cdot)$, calibrated transitions $\tf_{n-1}$ represented by $\bM_{n-1}(\cdot)$ and $\bSigma_{n-1}(\cdot)$, initial mean-field distribution $\mu_0$, known reward $r(\cdot)$, constants $C_{n,t}$, safety Lipschitz constant $L_h$; hyperparameter $\lambda$, number of epochs $K$, number of rollout steps $T$
    \State Initialize $\bpi^{\psi} {}={}(\pi^\psi_0,\ldots,\pi^\psi_{T-1})$
    \For{$k {}={}1,\ldots,K$}
        \State Initialize $\tmu_{0} \gets \mu_0$, $\ts_{0} \sim \mu_0$
        \State Initialize $r \gets 0$
        \For{$t {}={}0,\ldots,T-1$}
            \State $\ta_{t} \gets \pi^{\psi}_{t} (\ts_{t}, \tmu_{t})$
            \State $\ts_{t{}+{}1} \gets \tf_{n-1}(\ts_{t}, \tmu_{t}, \ta_{t}) {}+{} \epsilon_{t}$
            \State $\Tilde{\mu}_{t{}+{}1} \gets U(\tilde{\mu}_{t}, \pi^{\psi}_{t}, \tf_{n-1})$
        \EndFor
        \State Update $\psi$ with gradient ascent 
        \State $\quad\;\,\nabla_{\psi} \sum_{t=0}^{T-1} r(\ts_{t}, \Tilde{\mu}_{t}, \ta_{t}) {}+{} \lambda \log(h_C(\tmu_{t{}+{}1}) - L_{h}C_{n,t{}+{}1})$ 
    \EndFor
    \Ensure $\bpi_n^{\psi} \gets \bpi^{\psi}$
    \end{algorithmic}
    \label{alg:mf_bptt}
\end{algorithm}

\subsubsection{Mean-Field Deep Deterministic Policy Gradient (MF-DDPG)} \label{apx:ddpg}

In this section, we adopt the \textit{Deep Deterministic Policy Gradient} (DDPG) algorithm~\cite{lillicrap2015continuous} to the MFC. DDPG is a model-free actor-critic algorithm, hence, it can optimize \Cref{eqn:calibrated_safe} without the assumption of a differentiable simulator. However, it can not be applied directly to the problem because the Q-value for $\tilde{z}_{n,t} =(\ts_{n,t}, \tmu_{n,t}, \ta_{n,t})$ is ambiguous. The important insight here is that the value of a certain state $\ts_{n,t}$ of the environment reflects the whole population, i.e., the expected reward over the remainder of an episode for a given $\tmu_{n,t}$ if every agent in every state $\ts_{n,t}$ chooses actions following $\pi_{n,t}$. In essence, the Q-value is a function of $\tmu_{n,t}$ and $\pi_{n,t}$ and not $\tilde{z}_{n,t}$.

To overcome this issue, we introduce the \textit{lifted mean-field Markov decision process} (MF-MDP) similarly to~\cite{carmona2019model, Gu2019DynamicMFCs, Gu2021Mean-FieldAnalysis, Motte2019Mean-fieldControls, pasztor2021efficient}. First, we rewrite the reward function as a function of the mean-field distribution and the policy, i.e.,
\begin{equation*}
    \tilde{r}(\tmu_{n,t}, \pi_{n,t}) {}={}\int_\s r(s, \tmu_{n,t}, \pi_{n,t}(s, \tmu_{n,t}))\tmu_{n,t}(ds).
\end{equation*}
To simplify the notation, we overload the notation $\bpi_n^\psi {}={}(\bpi_n^\psi, \eta^\psi)$ as described in \Cref{apx:mf-bptt}. Then, we restate \Cref{eqn:calibrated_safe} as
\begin{subequations} \label{eqn:calibrated_safe_mfmdp}
    \begin{align}
    \psi^* {}={}\argmax_{\psi} \sum_{t=0}^{T-1} &\tilde{r}(\tmu_{n,t}, \pi_{n,t}^\psi) {}+{} \lambda\log(h_C(\tmu_{n,t{}+{}1}) - L_h C_{n,t{}+{}1}) \label{eqn:log-barrier-objective_mfmdp} \\
    \text{subject to}\quad \tf_{n-1}(\tz) &{}={}\bM_{n-1}(\tz) {}+{} \beta_{n-1}\bSigma_{n-1}(\tz)\eta^\psi(\tz) \\
    \tmu_{n,t{}+{}1} &{}={}U(\tmu_{n,t}, \pi_{n,t}^\psi, \tf_{n-1}),
\end{align}
\end{subequations}
where $\tmu_{n,0} {}={}\mu_0$ for every $n$. The MF-MDP formulation in \Cref{eqn:calibrated_safe_mfmdp} turns the MFC in \Cref{eqn:calibrated_safe} into a Markov Decision Process on the state space of $\mathcal{P}(\s)$ and action space $\{\pi: \s \times \mathcal{P}(\s) \to \mathcal{A}\}$ with deterministic transition function $U(\cdot)$. We define the Q-value as follows
\begin{equation*}
    Q_{n,t}(\tmu_{n,t}, \bpi_n^\psi) {}={}\sum_{j=t}^{T-1}\tilde{r}(\tmu_{n,t}, \pi_{n,t}^\psi).
\end{equation*}
The Q-function above is parameterized by $\theta$ and denoted as $Q^\theta_{n,t}$ for episode $n$. The DDPG algorithm can now be stated for the lifted MF-MDP problem as outlined in \Cref{alg:ddpg}. The main learning loop consists of alternating updates to the policy $\bpi^\psi_n$ and the critic $Q^\theta_n$. The most recent version of the policy is executed in the environment to collect more transitions into the replay buffer.

Notice that \Cref{alg:ddpg} uses a model-free approach to optimize the objective, which makes the exploration of particular importance. In comparison to traditional MDPs, MF-MDPs usually have a very constrained set of highly rewarding distributions, and most distributions offer poor rewards, which makes the exploration even more important.
This is further complicated by what randomized actions imply in this scenario. In a traditional MDP, we can often assume, for instance, that executing random actions for a fixed number of initial steps would help in finding diversity in the reward space.
This is not the case in MF-MDPs -- depending on the granularity of the discretization that we use to represent probability distributions, we can expect the mean-field distribution to stay fairly stable.
Similarly to~\cite{carmona2019model}, we might alleviate this issue by Gaussian mean-field initialization in each episode. Note, however, that this may not necessarily be appropriate in safety-constrained settings, as the initial mean-field distribution is expected to be safe. On top of that, we can add exploration noise to the actions obtained via the policy. Alternatively, we could add noise to the parameters of the policy network for a more consistent approach as in~\cite{plappert2017parameter}.

\begin{algorithm}[!t]
    \caption{Mean-Field Deep Deterministic Policy Gradient}
    \label{alg:ddpg}
    \begin{algorithmic}[1]
    \Require Safety constraint $h_C(\cdot)$, calibrated transitions $\tf_{n-1}$ represented by $\bM_{n-1}(\cdot)$ and $\bSigma_{n-1}(\cdot)$, initial mean-field distribution $\mu_0$, known expected reward $\hat{r}(\cdot)$, constants $C_{n,t}$; safety Lipschitz constant $L_h$, hyperparameter $\lambda$, number of epochs $K$, number of rollout steps $T$, mini-batch size $B$, learning rate $\alpha$
    \State Initialize $\bpi^{\psi} {}={}(\pi^\psi_0,\ldots,\pi^\psi_{T-1})$ 
    \State Initialize $Q^{\theta}{}={}(Q^\theta_0,\ldots,Q^\theta_{T-1})$
    \State Initialize $\theta' \gets \theta$, $\psi' \gets \psi$
    \State Initialize replay buffer $R \gets \emptyset$
    \For{$k {}={}1,\ldots,K$}
        \State Initialize $\tmu_{0} \gets \mu_0$
        \For{$t {}={}0,\ldots,T-1$}
            \State $\tmu_{t{}+{}1} \gets U(\tmu_{t}, \pi^{\psi}_{t}, \tf_{n-1})$
            \State $c_{t} \gets \hat{r}(\tmu_{t}, \pi^{\psi}_t) {}+{} \lambda\log(h_C(\tmu_{t+1}) - L_{h}C_{n,t+1})$
            \State $R \gets R \cup \{(\tmu_{t}, c_{t}, \tmu_{t{}+{}1})\}$
            \State Sample a mini-batch of $B$ random transitions $\{(\tmu_{i}, c_{i}, \tmu_{i{}+{}1})\}_{i=1}^B$ from $R$
            \For{$i=1,\ldots,B$}
                \State $q_{i} \gets c_{i} {}+{} \gamma Q^{\theta'}_t(\tmu_{i{}+{}1}, \bpi^{\psi'}(\cdot,\tmu_{i{}+{}1}))$
            \EndFor
            \State Update $\theta$ with gradient descent $\nabla_{\theta} \frac{1}{B} \sum_i (q_i - Q^{\theta}_t(\tmu_{i}, \bpi^{\psi}(\cdot,\tmu_{i}))^2$
            \State Update $\psi$ with gradient ascent $\nabla_{\psi} \frac{1}{B} \sum_i Q^{\theta}_t(\tmu_i,\bpi^{\psi}(\cdot,\tmu_{i}))$
            \State $\theta' \gets \alpha \theta {}+{} (1-\alpha) \theta'$
            \State $\psi' \gets \alpha \psi {}+{} (1-\alpha) \psi'$
        \EndFor
    \EndFor
    \Ensure $\bpi_{n}^\psi \gets \bpi^{\psi'}$
    \end{algorithmic}
\end{algorithm}

\section{Experiments -- Vehicle Repositioning}\label{apx:vehicle_repositioning}
In this section, we provide further analysis, the motivation behind our modeling decisions, and details for making our experiments easily replicable. We use a private cluster with GPUs to run our experiments. \algnmespace and \ucrlspace under known transitions each used 15 minutes of one Intel Xeon Gold 511 CPU core, 32 GB of RAM and one Nvidia GeForce RTX 3090 GPU. Training \algnmespace and \ucrlspace under unknown transitions to produce results in \Cref{fig:repositioning_learning_curve}, \Cref{fig:repositioning_constraint_violations} and \Cref{fig_apx:vehicle_data_efficiency} had 24 hours of access to fifty Intel Xeon Gold 511 CPU cores, 64 GB of RAM and fifty Nvidia GeForce RTX 3090 GPUs during the batch execution necessary for training. The evaluation, i.e., generating the results for, e.g., \Cref{fig_apx:vehicle_finite_regime_scatter} and \Cref{fig_apx:vehicle_finite_regime_mu} took around 1 hour of one Intel Xeon Gold 511 CPU core, 64 GB of RAM and one Nvidia GeForce RTX 3090 GPU. The only computationally intensive evaluation task was for \Cref{fig_apx:vehicle_finite_regime_rewards} for more than 1 million agents. We had access to Xeon Gold 511 CPU core, 128-256 GB of RAM, for around 8 hours. The implementation was predominantly done in Python packages PyTorch~\cite{paszke2017automatic} and NumPy~\cite{harris2020array}.

Our experimental workflow has the following structure:
\begin{enumerate} \label{item:repositioning_experimental_procedure}
    \item Input data preprocessing
    \begin{itemize}
        \item Estimating the demand distribution for the service $\rho_0$
        \item Estimating passenger's trip destinations' likelihood mapping $\Phi(\cdot)$
    \end{itemize}
    \item Modeling assumptions, model parameters, and distributions' representation
    \begin{itemize}
        \item State-space, action space, noise, reward, safety constraint 
        \item Mean-field distribution representation
        \item Mean-field transition $U(\cdot)$
    \end{itemize}
    \item Executing model-based learning protocol (\Cref{alg:learning_protocol})
    \begin{itemize}
        \item Optimizing \Cref{eqn:calibrated_safe}
        \item Learning unknown transitions using probabilistic neural network ensemble, i.e., statistical estimators $\bM_{n-1}(\cdot)$ and $\bSigma_{n-1}(\cdot)$
    \end{itemize}
    \item Performance evaluation in the infinite regime
    \item Performance evaluation in the finite regime
\end{enumerate}

\subsection{Input Data Preprocessing} \label{apx:vehicle_preprocessing}
We consider vehicle trajectories collected in Shenzhen's extended city center with the geographical area spanning from 114.015 to 114.14 degrees longitude and from 22.5 to 22.625 degrees latitude. We have access to the trajectories of five full working weeks (Monday to Sunday) collected between 18\textsuperscript{th} January 2016 and 25\textsuperscript{th} September 2016. We restrict ourselves to evening peak hours between 19:00 and 22:00. We represent probability distributions by discretizing the space into $k\times k$ unit grid with $k=25$ where each cell, $C_{ij}=[\frac{i}{k},\frac{i{}+{}1}{k}\rangle \times [\frac{j}{k},\frac{j{}+{}1}{k}\rangle$ for $i,j \in \{0,\ldots,k-1\}$, represents a square neighborhood of around $550\times 550$ meters on the city map. 
We represent the service demand distribution $\rho_0 \in \pp(\s)$ as a $k\times k$ matrix where entries $[\rho_0]_{ij}, i,j \in \{0,\ldots,k-1\}$ represent a probability of a trip originating in the neighborhood $C_{ij}$. The probabilities are estimated as an average over the considered period and kept constant during each step $t$ of the learning protocol (see \Cref{apx:vehicle_learning_protocol}). To smooth out the demand distribution and remove possible noise in the raw data, we apply 2-dimensional median smoothing with a window equal to 3 and show the output in \Cref{fig:target_mu}. If the passenger's trip originates at the state $s\in\s$ the likelihood of its destinations is defined by the mapping $\Phi:\s\rightarrow\pp(\s)$ which we use in \Cref{apx:vehicle_modeling} to define sequential transitions by first executing passenger trips followed by vehicle repositioning (see \Cref{fig:trajectory}). We flatten the $k\times k$ space grid into $k^2$-dimensional vector and represent $\Phi(\cdot)$ as a $k^2\times k^2$ probability matrix, i.e., rows summing to one represent outgoing mass for each cell. The entries $[\Phi]_{ij}$ with $i,j\in\{0,\ldots,k^2-1\}$ represent the likelihood of a passenger's trip that originated in the neighborhood $i$ ending in the neighborhood $j$. The likelihoods are estimated as an average over the considered period and kept constant. \Cref{fig_apx:vehicle_destination_likelihood} shows $\Phi$ column average, i.e., the trip destinations' likelihood given an arbitrary trip origin.

\begin{figure}[hbt!]
\centering
\begin{subfigure}[t]{0.42\textwidth}
    \centering
    \includegraphics[width=\textwidth]{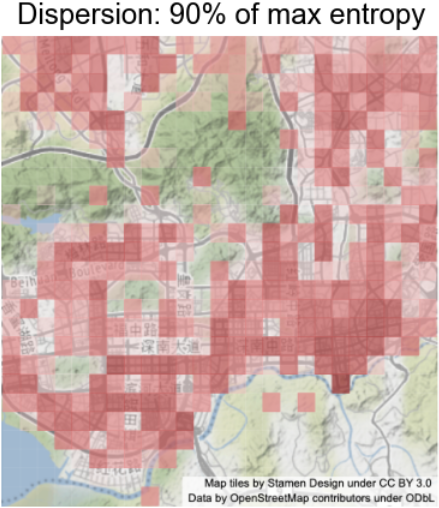}
\end{subfigure}
\begin{subfigure}[b]{0.05\textwidth}
    \centering
    \includegraphics[height=8.4cm]{figure2e.png}
\end{subfigure}
\caption{Passenger's trip destinations' likelihood during evening peak hours given an arbitrary trip origin (see \Cref{apx:vehicle_preprocessing}). We observe that the trips to most residential areas are almost equally likely and dispersed across the entire study region, which is consistent with the intuition that some residents commute back home (outside of the city center) after a work day, while others commute towards the city center for, e.g., leisure activities.}
\label{fig_apx:vehicle_destination_likelihood}
\Description{Passenger's trip destinations' likelihood during evening peak hours given an arbitrary trip origin. We observe that the trips to most residential areas are almost equally likely and dispersed across the entire study region, which is consistent with the intuition that some residents commute back home (outside of the city center) after a work day, while others commute towards the city center for, e.g., leisure activities.}
\end{figure}

\subsection{Modeling Assumptions and Model Parameters} \label{apx:vehicle_modeling}
We represent our area of interest as a two-dimensional unit square, i.e., the state-space $\s {}={}[0,1]^2$, and assume that vehicles can move freely using repositioning actions taken from $\A {}={}[-1,1]^2$. If the action takes a vehicle outside of the state-space borders, we project it back onto the border. Since the fleet is operating in a noisy traffic environment, repositioning usually cannot be executed perfectly. We model the noise $\epsilon_{n,t}\sim \text{TN}(0, \sigma^2 I_2)$ by a Gaussian with a known variance $\sigma^2$ truncated at the borders of $\s$ and $I_2$ is the $2\times 2$ unit matrix. We use a standard deviation $\sigma=0.0175$ to represent that the vehicle will be repositioned with a 68\% probability within a circle with a 240-meter radius of the desired location or with 95\% probability within a circle with a 480-meter radius. For simplicity, we assume that every passenger ride lasts fifteen minutes and that the repositioning is executed instantaneously. We assume that the representative agent (or multiple representative agents) reports the trajectories obtained during the interaction with the environment to the global controller at the end of the day. Therefore, we discretize our modeling horizon in fifteen-minute operational intervals, i.e., $T=12$, and each episode $n$ represents one day. The goal of the service provider is maximizing the profit, which correlates with the amount of satisfied demand. Therefore, we model the service provider goal of maximizing the coverage of the demand by the negative of the Kullback-Leibler divergence between the vehicles' distribution $\mu_{n,t}$ and demand for service $\rho_0$, i.e., $r(z_{n,t}) {}={}-D_{KL}(\rho_0 || \mu_{n,t})$. In other words, $r(\cdot)$ measures the similarity between vehicles' and demand distributions; the closer the distributions, the higher the profit. In practice, greedy profit maximization is often prevented by imposing service accessibility requirements by regulatory bodies. Due to the discrete representation of the mean-field distribution (discussed in the next paragraph), we use Shannon entropy to define the safety constraint
\begin{equation} \label{eq_apx:shannon_entropy}
    h_C(\mu_{n,t}) {}={}-\sum_{i,j} \log([\mu_{n,t}]_{ij})[\mu_{n,t}]_{ij} - C
\end{equation}
to enforce the service accessibility across all residential areas. Therefore, the optimization objective in \Cref{eqn:calibrated_safe} trades off between maximizing the profit $r(\cdot)$ and adhering to accessibility requirements imposed by $h_C(\cdot)$.
We use matrix $\Phi$ introduced \Cref{apx:vehicle_preprocessing} to model sequential transitions by first executing passenger trips followed by vehicle repositioning. Formally, the next state $s_{n,t{}+{}1} {}={}f(z_{n,t}) {}+{} \epsilon_{n,t}$ is induced by the unknown transitions $f(z_{n,t}) {}={}\text{clip}(s^\Phi_{n,t} {}+{} a_{n,t}, 0, 1)$, where $s^\Phi_{n,t}\sim\Phi(s_{n,t})$. Firstly, we find origin cell $i \in k^2$ such that $s_{n,t}$ resides in it and sample destination cell $j \in k^2$ given likelihoods defined in row $i$ of the probability matrix $\Phi$. Secondly, we determine the destination state $s^\Phi_{n,t}$ by uniform sampling from the destination cell $j$, which is a simplified model of the passenger preferences of the final destination. Notice that the controller determines repositioning actions given intermediate states $s^\Phi_{n,t}$ obtained after executing passenger trips.

\textbf{Mean-field transitions.} During the learning/training phase, we assume that the number of agents $m \rightarrow \infty$ and that these agents induce the mean-field distribution $\mu_{n,t}$ for episode $n$ and step $t$. But, one of the major practical challenges is implementing the mean-field transition function $U(\cdot)$ (see \Cref{eqn:mf_transition}). The main difficulties are representing the mean-field distribution $\mu_{n,t}$ and computing the integral in \Cref{eqn:mf_transition}. We use discretization to represent the mean-field distribution even though other representations, such as a mixture of Gaussians, are possible. Concretely, we represent the mean-field distribution as $\mu_{n,t} {}={}[\mu_{n,t}]_{ij}$ with $i,j \in \{0,\ldots,k-1\}$ with $k=25$ by associating the probability $[\mu_{n,t}]_{ij} {}={}\p \big\lbrack s_{n,t} \in C_{ij} \big\rbrack$ of the representative agent residing within each cell $C_{ij}$ during episode $n$ at step $t$. Note that the discrete representation of the mean-field distribution does not affect the state and action spaces, which remain continuous. The initial mean-field distributions, $\mu_{n,0}$ for every $n$, follow the uniform distribution which maximizes the Shannon entropy, ensuring the safety at the beginning of every episode $n$ at $t=0$, i.e., $h_C(\mu_{n,0}) {}\geq{} 0$. In vehicle repositioning, the mean-field transitions $U(\cdot)$ consist of two sequential steps induced by transitions $f$. Namely, first, the demand shifts the mean-field distribution, followed by the transition induced by the controller. Formally, the mean-field demand transition is computed as $\mu_{n,t}^\Phi=(\mu_{n,t}\cdot p)\times\Phi {}+{} \mu_{n,t}\cdot(1-p)$, where $\cdot$ denotes elementwise multiplication, $\times$ denotes matrix multiplication and $p=\min(1,\frac{\rho_0}{\mu_{n,t}})$ represents elementwise proportion of occupied vehicles. The mean-field controller transition requires computing the integral in \Cref{eqn:mf_transition} for which we use the discrete approximation given the points $c_{ij}$ uniformly chosen from cells $C_{ij}$ for $i,j\in\{0,\ldots,k-1\}$
 \begin{equation} \label{eq_apx:mf_discretized_transition}
     [\mu_{n,t{}+{}1}]_{ij} {}={}\sum_{k,l} \p[f(c_{kl}, \mu_{n,t}^\Phi,\pi_{n,t}(c_{kl}, \mu_{n,t}^\Phi)) {}+{} \epsilon_{n,t} \in C_{ij}][\mu_{n,t}^\Phi]_{kl},
 \end{equation}
for the episode $n$ and step $t$. We assume that the noise term $\epsilon_{n,t}$ is independent across episodes and steps as well as along the two dimensions while the truncation parameters are adjusted relative to the state space borders. Thus, we have the following
\begin{equation} \label{eq_apx:mf_discretized_probability}
\begin{split}
    &\p\left\lbrack f(c_{kl}, \mu^\Phi_{n,t},\pi_{n,t}(c_{kl}, \mu_{n,t}^\Phi)) {}+{} \epsilon_{n,t} \in C_{ij}\right\rbrack \\
    ={}& \p\left\lbrack f(c_{kl}, \mu^\Phi_{n,t},\pi_{n,t}(c_{kl}, \mu_{n,t}^\Phi))_x {}+{} \epsilon_{n,t,x} \in \left\lbrack\frac{i}{k},\frac{i{}+{}1}{k}\right\rangle\right\rbrack \\
    &\times \p\left\lbrack f(c_{kl}, \mu^\Phi_{n,t}, \pi_{n,t}(c_{kl}, \mu_{n,t}^\Phi))_y {}+{} \epsilon_{n,t,y} \in \left\lbrack\frac{j}{k},\frac{j{}+{}1}{k}\right\rangle\right\rbrack \\
    ={}& \left\lbrack\phi\left(\frac{i{}+{}1}{k}-f(c_{kl}, \mu^\Phi_{n,t},\pi_{n,t}(c_{kl}, \mu_{n,t}^\Phi))_x\right)\right. \\
    &\left. -\;\phi\left(\frac{i}{k}-f(c_{kl}, \mu^\Phi_{n,t},\pi_{n,t}(c_{kl}, \mu_{n,t}^\Phi))_x\right)\right\rbrack \\
    &\cdot \left\lbrack\phi\left(\frac{j{}+{}1}{k}-f(c_{kl}, \mu^\Phi_{n,t},\pi_{n,t}(c_{kl}, \mu_{n,t}^\Phi))_y\right)\right. \\
    &\left. -\;\phi\left(\frac{j}{k}-f(c_{kl}, \mu^\Phi_{n,t},\pi_{n,t}(c_{kl}, \mu_{n,t}^\Phi))_y\right)\right\rbrack, 
\end{split}
\end{equation}
where $\phi(\cdot)$ is the cumulative distribution function of truncated Gaussian $\text{TN}(0, \sigma^2)$.

\textbf{Mean-field transitions in the finite regime}\label{apx:vehicle_mf_finite_transitions}
In \Cref{apx:vehicle_evaluation_finite_regime}, we instantiate a finite number of vehicles $m < \infty$ to evaluate the policy performance in a realistic setting. We keep track of vehicles' states $s_t^{(l)}$ for every vehicle $l \in \{1,\ldots,m\}$ at steps $t {}={}0,\ldots,T$. In this setting, we approximate the mean-field transition $U(\cdot)$ with the normalized two-dimensional histogram $[\mu_{t{}+{}1}]_{ij}$ with bins defined by the cells $C_{ij}$ for  $i,j \in \{0,\ldots,k-1\}$ given vehicles' next states $s_{t{}+{}1}^{(l)} {}={}f(s_t^{(l)}, \mu_t^\Phi, \pi(s_t^{(l)},\mu_t^\Phi)) {}+{} \epsilon_t$.

\subsection{Model-Based Learning Protocol} \label{apx:vehicle_learning_protocol}
We use the learning protocol introduced in \Cref{alg:learning_protocol} to train \algnme. For hyperparameters of the model, see \Cref{tab_apx:vehicle_parameters_protocol}.

To optimize the objective in \Cref{eqn:calibrated_safe} in the subroutine in Line 3, we use MF-BPTT (see \Cref{apx:mf-bptt}). We parametrize the policy via a fully-connected neural network with two hidden layers of 256 neurons and Leaky-ReLU activations. The output layer returns the agents' actions using Tanh activation. We use Xavier uniform initialization~\cite{glorot2010understanding} to randomly initialize weights while we set bias terms to zero. We use 20,000 training epochs with the early stopping if the policy does not improve at least 0.5\% within 500 epochs. To prevent gradient explosion, we use L2-norm gradient clipping with max-norm set to 1. Note that in our experiments, a single neural network had enough predictive power to represent the whole policy profile $\bpi {}={}(\pi_0,\ldots,\pi_{T-1})$, but using $T$ networks, one for each individual policy $\pi_t$ is a natural extension. For further details about hyperparameters, see \Cref{tab_apx:vehicle_parameters_policy}. Additionally, note that we parametrize policy by a neural network with Lipschitz continuous activations (Leakly-ReLU and Tanh). In the case of the bounded neural network's weights, policy satisfies the Lipschitz continuity assumption (see \Cref{asm:policy_lipschitz}). In practice, we bound the weights via L2-regularization.

We estimate the confidence set of transitions $\F_{n-1}$ in the subroutine in Line 5 using a probabilistic neural network ensemble (see \Cref{apx:ensemble}). We use an ensemble of 10 fully-connected neural networks with two hidden layers of 16 neurons and Leaky-ReLU activations. The output layer returns the mean vector and variance vector (because of the covariance matrix diagonality assumption) of the confidence set; the mean uses linear activation, and the variance uses Softplus activation. We minimize the negative log-likelihood (NLL) for each neural network under the assumption of heteroscedastic Gaussian noise as described in~\cite{lakshminarayanan2017simple}. We randomly split the data from the replay buffer into a training set (90\%) and a validation set (10\%). We use 10,000 training epochs with the early stopping if the performance on the validation set does not improve for at least 0.5\% within 100 consecutive epochs. For further details about hyperparameters, see \Cref{tab_apx:vehicle_parameters_ensemble}.

\subsection{Performance Evaluation in the Infinite Regime} \label{apx:vehicle_evaluation_infinite_regime}
In this section, we extend the experiments presented in \Cref{sec:vehicle_repositioning}. We assume that the number of vehicles $m\rightarrow{}+{}\infty$ and show two important results. 
First, we show a conservative behavior of \algnmespace by training model with Shannon entropy safety constraint (see \Cref{rmk:shannon_entropy}) with the safety threshold $C$ set to $p=0.50$ of the maximum Shannon entropy, i.e., $C {}={}p \log(k^2)$. We then evaluate its performance against a much higher safety threshold, i.e., against the safety threshold induced by $p=0.85$. \Cref{fig_apx:vehicle_multi_constraint_violations} shows that the model never violates stricter safety constraint regardless of its training in a weaker setting. Second, we show the data efficiency of the learning protocol (\Cref{alg:learning_protocol}) by training models with access to one, five, and ten representative agents for data collection. \Cref{fig_apx:vehicle_data_efficiency} shows that the model under unknown transitions converges to the model trained under known transitions almost 6 times faster when using ten representative agents instead of one representative agent. It is a very useful result that can be utilized in many applications. For example, in most transportation applications, using tens or even hundreds of representative agents often does not cause cost-related issues. On the contrary, it might be more cost-effective to have ten representative agents for a month than one representative agent for a year.

\subsection{Performance Evaluation in the Finite Regime} \label{apx:vehicle_evaluation_finite_regime}
In this section, we assume a finite number of vehicles $m < {}+{}\infty$ and approximate the mean-field distribution $$\mu_t(s) {}={}\lim_{m \xrightarrow{} \infty} \frac{1}{m} \sum_{i=1}^m \I(s_t^{(i)} {}={}s)$$ with the empirical distribution as explained in \Cref{apx:vehicle_evaluation_finite_regime}. The policy profile $\bpi_n^*$ trained in the infinite regime is used to reposition each of $m$ individual vehicles in the fleet.
In \Cref{fig_apx:vehicle_finite_regime_rewards}, we show the relationship between model performance and the number of vehicles in the system. As expected, we observe that increasing the number of vehicles leads to better performance due to the increased precision of mean-field distribution $\mu_t$ approximation at step $t$. By performing 100 randomly initialized runs for each of various fleet sizes, we see that \algnmespace learned under unknown transitions and applied in the finite regime converges to the solution achieved in the infinite regime under known transitions. Furthermore, the model performs very well already for a fleet of 10,000 vehicles, which is in the order of magnitude of the fleet size that operates in Shenzhen. Concretely, in 2016, the fleet had around 17,000 vehicles, with an increasing trend. We also observe that in the finite regime, the difference in performance between \algnmespace trained under known and unknown transitions becomes insignificant. To showcase the practical usefulness of the algorithm, in \Cref{fig_apx:vehicle_finite_regime_scatter}, we instantiate 10,000 vehicles and display their positions after the final repositioning at step $T=12$. We observe that the majority of vehicles are repositioned to areas of high demand. At the same time, some of them are sent to residential zones in the northwest and northeast to enforce accessibility. It is important to note that some vehicles are repositioned to aquatic areas and areas without infrastructure due to two reasons. First, our model guarantees global safety without explicit guarantees for individual/local safety. Notice that undesirable areas might be avoided by safety constraint shaping, e.g., by setting the weight function for these areas to zero as discussed in \Cref{rmk:weighted_safety}. Second, the model loses some of its accuracy due to the finite regime approximation errors. In \Cref{fig_apx:vehicle_finite_regime_mu}, we observe only a slight decrease in dispersion when \algnmespace finite regime approximation is compared to the infinite regime performance. To conclude, we showcase the potential of \algnmespace for vehicle repositioning in the finite regime, which might be a positive signal for real-world practitioners.

\section{Experiments -- Swarm Motion}\label{apx:swarm_motion}
In this section, we extend the swarm motion experiments discussed in \Cref{sec:swarm_motion} and complement the vehicle repositioning experiments elaborated in \Cref{sec:vehicle_repositioning} and \Cref{apx:vehicle_repositioning}. For this experiment, we use the same private cluster and approximately the same amount of computational resources as reported in \Cref{apx:vehicle_repositioning}.

\subsection{Modeling Assumptions and Model Parameters} \label{apx:swarm_modeling}
We model the state space $\s$ as the unit torus on the interval $[0,1]$ and set the action space as the interval $\A {}={}[-7, 7]$ due to the knowledge of the range of actions from the continuous-time analytical solution~\cite{almulla2017two}. We approximate the continuous-time swarm motion by partitioning unit time into $T=100$ equal steps of length $\Delta t {}={}1{}/T$. The next state $s_{n,t{}+{}1} {}={}f(z_{n,t}) {}+{} \epsilon_{n,t}$ is induced by the unknown transitions $f(z_{n,t}) {}={}s_{n,t} {}+{} a_{n,t}\Delta t$ with $\epsilon_{n,t} \sim \text{N}(0, \Delta t)$ for all episodes $n$ and steps $t$. The reward function is defined by $r(z_{n,t}) {}={}\phi(s_{n,t}) - \frac{1}{2}a_{n,t}^2 - \log(\mu_{n,t})$, where the first term $\phi(s) {}={}2 \pi^2 (\sin(2\pi s) - \cos^2(2\pi s)) {}+{} 2\sin(2\pi s)$ determines the positional reward received at the state $s$ (see \Cref{fig_apx:positional_reward}), the second term defines the kinetic energy penalizing large actions, and the last term penalizes overcrowding. Note that the optimal solution for continuous time setting, $\Delta t \rightarrow 0$, can be obtained analytically. Namely, for the infinite time horizon, i.e., $T \rightarrow \infty$, we have
\begin{align} \label{apx:analytical_solution}
\begin{split}
    \pi^*(s,\mu) &{}={} 2\pi\cos(2\pi s) \\
    \mu^*(s) &{}={} \frac{e^{2\sin(2\pi s)}}{\int_\s e^{2\sin(2\pi s')}ds'},
\end{split}
\end{align}
where $\pi^*$ and $\mu^*$ form an ergodic solution satisfying $\mu^* {}={} U(\mu^*,\pi^*,f)$. We use $\mu^*$ as a benchmark but note that it might no longer be an optimal solution in the discrete-time setting. Therefore, discrete-time solutions obtained under known transitions serve as a good benchmark for \algnmespace performance under unknown transitions. To control overcrowding, we use the Shannon-entropic constraint introduced in \Cref{eq_apx:shannon_entropy} instead of having the overcrowding penalty term $\log(\mu_{n,t})$ in the reward. As discussed in \Cref{apx:vehicle_modeling}, Shannon entropy is used because of the discrete mean-field distribution representation. Since higher entropy translates into less overcrowding, we can upfront determine and upper-bound the acceptable level of overcrowding by setting a desirable threshold $C$. Similarly to the discussion in \Cref{apx:vehicle_modeling}, we represent the mean-field distribution by discretizing the state space into $k = 100$ uniform intervals and assigning the probability of the representative agent residing within each of them. To compute the mean-field transitions $U(\cdot)$, we use one-dimensional equivalent of \Cref{eq_apx:mf_discretized_transition} and \Cref{eq_apx:mf_discretized_probability}. We set the safety threshold $C$ as a proportion $p \in [0,1]$ of the maximum Shannon entropy \Cref{eq_apx:shannon_entropy}, i.e., $C{}={}p\log(k)$. We initialize safe mean-field distributions $\mu_{n,0}$ as uniform distributions since they maximize Shannon entropy, which makes them safe for every threshold $C$.

\subsection{Model-Based Learning Protocol} \label{apx:swarm_learning_protocol}
We follow the same procedure as described in \Cref{apx:vehicle_learning_protocol} with the hyperparameters from \Crefrange{tab_apx:swarm_parameters_protocol}{tab_apx:swarm_parameters_ensemble}. The only difference compared to \Cref{apx:vehicle_learning_protocol} is the increase in the complexity of the computational graph because of the high number of steps $T$. Therefore, we use batch normalization~\cite{ioffe2015batch} to prevent vanishing gradients.

\subsection{Performance Evaluation} \label{apx:swarm_evaluation}
In this section, we complement the results shown in \Cref{sec:swarm_motion}. We first visualize the positional reward $\phi(\cdot)$ in \Cref{fig_apx:positional_reward} for the ease of interpretation of the obtained results. We see that the reward has two local maxima, but due to a significant difference in their value, unconstrained benchmarks ignore the lower maxima. On the other hand, \algnmespace for $p=0.95$, and to a certain extent for $p=0.5$, take advantage of it by reducing the kinetic energy in the neighborhood of lower maxima as shown in \Cref{fig:swarm_policies}. 
\begin{figure}[hbt!]
\centering
\includegraphics[width=\columnwidth]{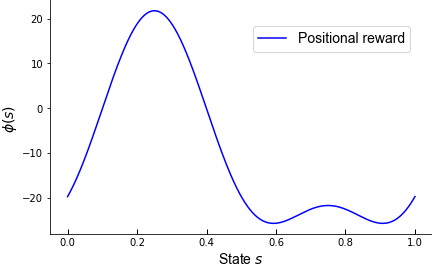}
\caption{Swarm motion positional reward $\phi(\cdot)$.}
\label{fig_apx:positional_reward}
\Description{Swarm motion positional reward.}
\end{figure}
In \Cref{fig_apx:swarm_progression}, we show the mean-field distributions progression over time guided by policies learned by \algnmespace for $p=0.5$ and $p=0.95$. We observe that for $p=0.5$, we reach near-stationary distribution after only 10 steps (see \Cref{fig_apx:swarm_progression_50pct}), i.e., the distribution remains the same until the algorithm terminates at $T=100$. For $p=0.95$, we reach stationarity even faster, as shown in \Cref{fig_apx:swarm_progression_95pct}. The learning process presented in \Cref{fig:swarm_learning_curve} is explained by the reduction of the epistemic uncertainty in the estimated transitions $\tf_{n-1}$. Before the first episode $n=1$, the statistical model is estimated only from trajectories collected by randomly initialized policy $\pi_0$. Due to the high epistemic uncertainty in regions that random policy did not explore, upper-confidence hallucinated transitions \Cref{eqn:hallucinated_transitions} do not approximate well true transitions (see \Cref{fig_apx:swarm_transitions_episode1}). By episode $n=5$, the model already has a good approximation of the transitions (see \Cref{fig_apx:swarm_transitions_episode5}), while at episode $n=50$, the transitions are known with near-certainty (see \Cref{fig_apx:swarm_transitions_episode50}). These results coincide with the observation in \Cref{fig:swarm_learning_curve} where around episode $n=50$, \algnmespace starts obtaining the results as if the transitions were known. Note that we implement the toroidal state space on $\s=[0,1]$ by assuming a sufficiently large extension, e.g., $[-1,2]$, of the interval over its borders such that a new state resulting from any possible action is captured with high probability. The new state is then mapped back to interval $[0,1]$ using the modulo operation.
For completeness of the analysis, in \Cref{fig_apx:swarm_learning_curve_50pct}, we show that \algnmespace for $p=0.5$ converges to the result obtained under known transitions. 
\begin{figure}[hbt!]
\centering
\includegraphics[width=\columnwidth]{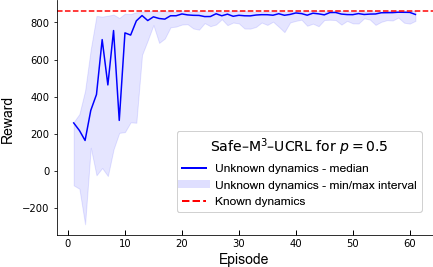}
\caption{\algnmespace learning curve for $p=0.5$ for swarm motion.}
\label{fig_apx:swarm_learning_curve_50pct}
\Description{Swarm motion learning curve for p equals 0.5.}
\end{figure}
We further validate the observation presented in \Cref{fig:swarm_distributions} that the constraint for $p=0.5$ results in similar overcrowding as the reward penalty term $-\log(\mu)$. Namely, in \Cref{fig_apx:swarm_constraint_violations_50pct}, we see that \algnmespace for $p=0.5$ and \ucrlspace with overcrowding penalty term satisfy the safety constraint $h_C(\mu) = 0.5\log(k)$ with a similar margin. 
\begin{figure}[hbt!]
\centering
\includegraphics[width=\columnwidth]{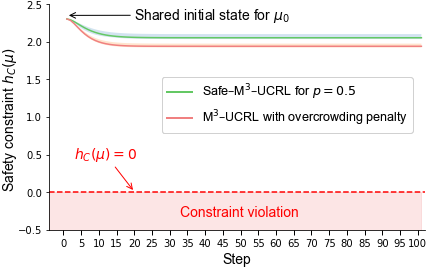}
\caption{Swarm motion safety for $p=0.5$.}
\label{fig_apx:swarm_constraint_violations_50pct}
\Description{Swarm motion constraint violation for p equals 0.5}
\end{figure}

\begin{figure*}[!t]
\centering
\begin{subfigure}[t]{0.49\textwidth}
    \centering
    \includegraphics[width=\textwidth]{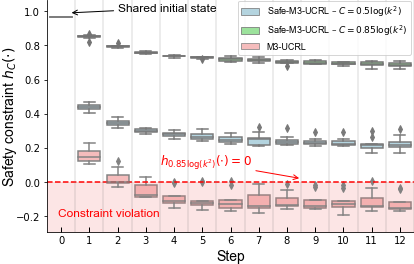}
    \captionsetup{justification=centering}
    \caption{\algnmespace conservative behavior}
    \label{fig_apx:vehicle_multi_constraint_violations}
\end{subfigure}
\hfill
\begin{subfigure}[t]{0.49\textwidth}
    \centering
    \includegraphics[width=\textwidth]{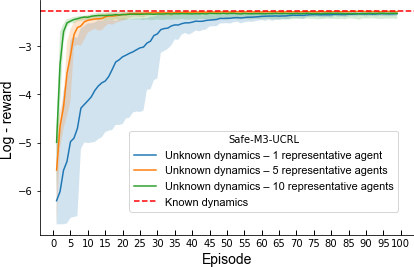}
    \captionsetup{justification=centering}
    \caption{Learning protocol data efficiency}
    \label{fig_apx:vehicle_data_efficiency}
\end{subfigure}
\caption{We showcase \algnmespace conservative behavior and data efficiency by training 10 randomly initialized policy profiles and statistical models where each setup uses the entropic safety constraint $h_C(\cdot){}\geq{} 0$. We set the safety threshold $C$ as a proportion $p$ of the maximum Shannon entropy, i.e., $C=p\log(k^2)$ with $k=25$. In (a), the policy profiles trained for satisfying $h_{0.5\log(k^2)}(\cdot){}\geq{} 0$ never violate $h_{0.85\log(k^2)}(\cdot){}\geq{} 0$, which shows the conservative behavior of our model. In (b), we show the data efficiency of the learning protocol (\Cref{alg:learning_protocol}) by comparing learning curves observed during training models that satisfy $h_{0.85\log(k^2)}(\cdot){}\geq{} 0$ when using one, five and ten representative agents (RA) for data collection. We see that the model trained with 1-RA converges to the performance of the model under known transitions in around 80 episodes, while it takes 25 and 15 episodes for 5-RA and 10-RA models, respectively. Note that we use log-reward to emphasize learning speeds on a comparable scale.}
\label{fig_apx:vehicle_multi_training}
\Description{Learning protocol conservative behavior and data efficiency.}
\end{figure*}

\begin{figure*}[hbt!]
\centering
\begin{subfigure}[t]{0.49\textwidth}
    \centering
    \includegraphics[width=\textwidth]{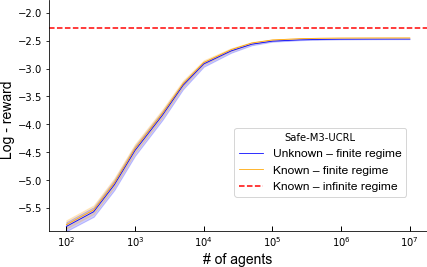}
    \captionsetup{justification=centering}
    \caption{\\ Objective from \Cref{eqn:calibrated_safe} achieved in the finite regime by approximating mean-field distribution with empirical distribution}
    \label{fig_apx:vehicle_finite_regime_rewards}
\end{subfigure}
\hfill
\begin{subfigure}[t]{0.49\textwidth}
    \centering
    \includegraphics[width=5cm]{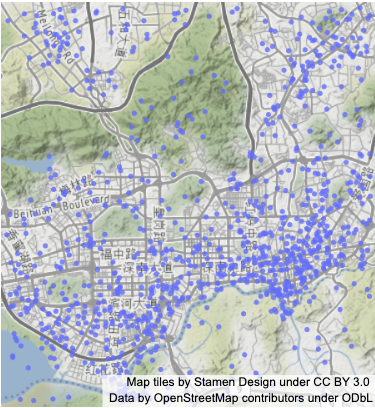}
    \captionsetup{justification=centering}
    \caption{\\ Vehicles' locations after repositioning action \\ in the finite regime}
    \label{fig_apx:vehicle_finite_regime_scatter}
\end{subfigure}
\caption{To showcase \algnmespace performance in the finite regime, we instantiate a finite number of vehicles, each following a policy profile $\bpi_n^*$ learned in the infinite regime and satisfying the entropic safety constraint $h_C(\cdot){}\geq{} 0$ for $C=0.85\log(k^2)$ with $k=25$. We perform 100 randomly initialized runs for each of the various fleet sizes. In (a), we see that increasing the number of vehicles leads to better performance due to the increased precision of mean-field distribution approximation. Further, we see that \algnmespace learned under unknown transitions and applied in the finite regime converges to the solution achieved in the infinite regime under known transitions. We also see that the curves for \algnmespace trained under unknown and known transitions almost overlap, i.e., the value of knowing transitions has little to no insignificance in the finite regime. In (b), we showcase the performance for the realistic number of vehicles operating in Shenzhen (10,000 to 20,000). We display the positions of a randomly chosen subset of 1,000 vehicles (out of 10,000) at step $T=12$ after the final repositioning. We observe that the majority of vehicles are repositioned to areas of high demand, while some of them are sent to residential zones in the northwest and northeast to enforce accessibility.}
\label{fig_apx:vehicle_finite_regime}
\Description{The proposed algorithm performance in finite regime.}
\end{figure*}

\begin{figure*}[hbt!]
\centering
\begin{subfigure}[t]{0.46\textwidth}
    \centering
    \includegraphics[width=\textwidth]{figure2d.png}
    \captionsetup{justification=centering}
    \caption{\\\algnmespace under unknown transitions in the infinite regime}
   \label{fig_apx:vehicle_safe_ucrl_mu_known}
\end{subfigure}
\hfill
\begin{subfigure}[t]{0.46\textwidth}
    \centering
    \includegraphics[width=\textwidth]{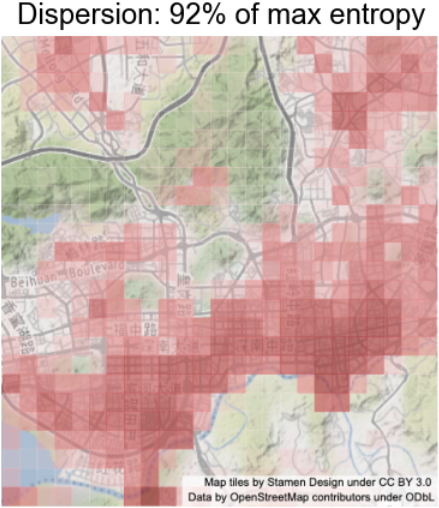}
    \captionsetup{justification=centering} 
    \caption{\\\algnmespace under unknown transitions in the finite regime}
    \label{fig_apx:vehicle_safe_ucrl_mu_unknown_finite}
\end{subfigure}
\hfill
\begin{subfigure}[b]{0.05\textwidth}
    \centering
    \includegraphics[height=9.2cm]{figure2e.png}
\end{subfigure}
\caption{We show the difference in the performance introduced by approximating mean-field distribution with the finite number of vehicles as described in \Cref{apx:vehicle_mf_finite_transitions}. In (a), we see that the dispersion of \algnmespace in the infinite regime is $p=0.96$ as elaborated in \Cref{fig:repositioning_mu_comparison}. In (b), the policy profile $\bpi_n^*$ trained in the infinite regime is used to control a fleet of 10,000 vehicles. We observe a slight decrease in the dispersion from $p=0.96$ to $p=0.92$ due to the mean-field distribution approximation errors.} \label{fig_apx:vehicle_finite_regime_mu}
\Description{Mean-field approximation error in the finite regime.}
\end{figure*}

\begin{figure*}[htb!]
\centering
\begin{subfigure}[t]{\textwidth}
    \centering
    \includegraphics[width=\textwidth]{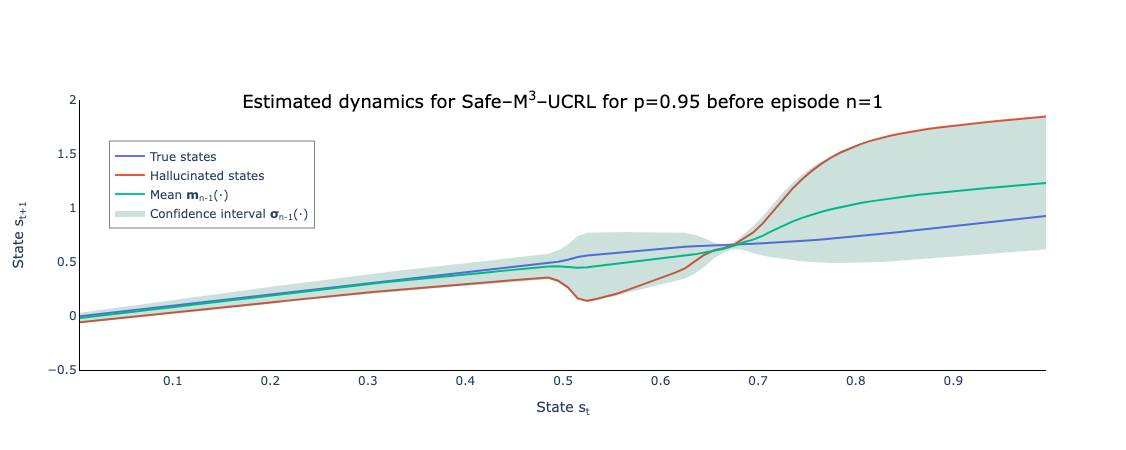}
    \captionsetup{justification=centering}
    \caption{}
    \label{fig_apx:swarm_transitions_episode1}
\end{subfigure}

\begin{subfigure}[t]{\textwidth}
    \centering
    \includegraphics[width=\textwidth]{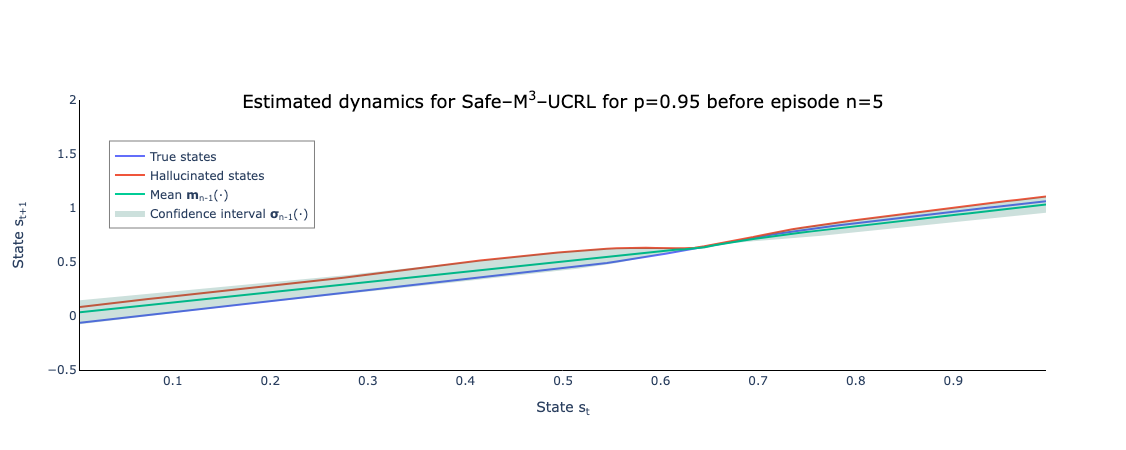}
    \captionsetup{justification=centering}
    \caption{}
    \label{fig_apx:swarm_transitions_episode5}
\end{subfigure}

\begin{subfigure}[t]{\textwidth}
    \centering
    \includegraphics[width=\textwidth]{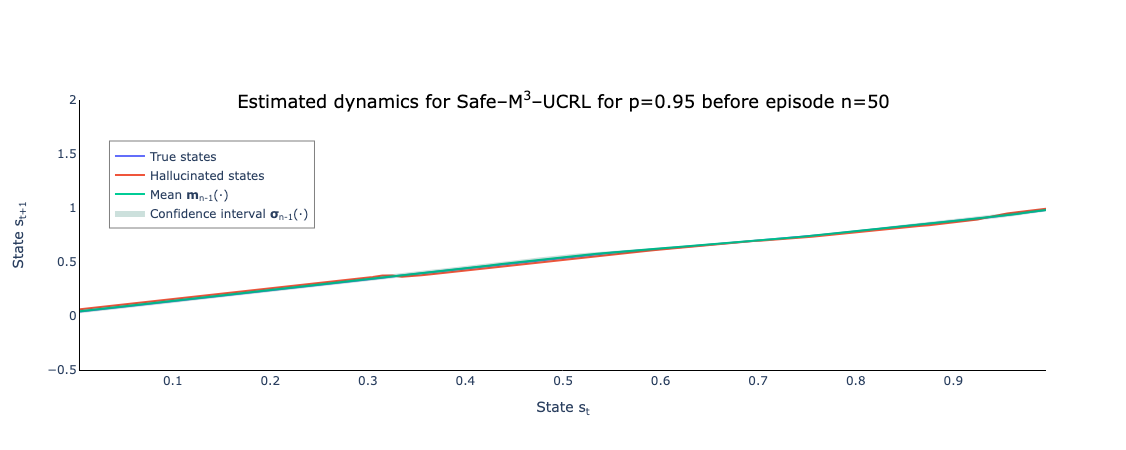}
    \captionsetup{justification=centering}
    \caption{}
    \label{fig_apx:swarm_transitions_episode50}
\end{subfigure}
\label{fig_apx:swarm_transitions}
\end{figure*}

\begin{figure*}[!t]
\centering
\begin{subfigure}[t]{0.49\textwidth}
    \centering
    \includegraphics[width=\textwidth]{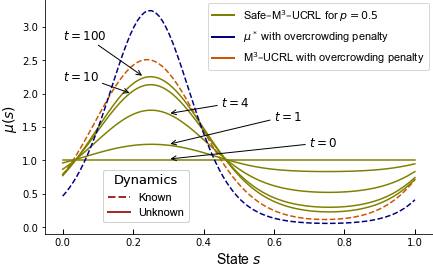}
    \captionsetup{justification=centering}
    \caption{}
    \label{fig_apx:swarm_progression_50pct}
\end{subfigure}
\hfill
\begin{subfigure}[t]{0.49\textwidth}
    \centering
    \includegraphics[width=\textwidth]{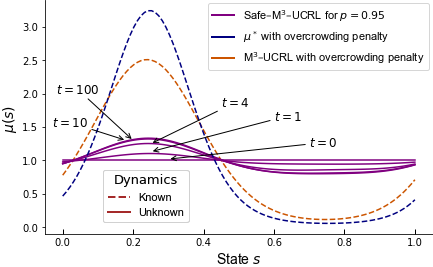}
    \captionsetup{justification=centering}
    \caption{}
    \label{fig_apx:swarm_progression_95pct}
\end{subfigure}
\caption{Mean-field distributions progression over time.}
\label{fig_apx:swarm_progression}
\Description{Mean-field distributions progression over time.}
\end{figure*}

\begin{table*}[hbt!]
\centering
\caption{Learning protocol hyperparameters for vehicle repositioning}
\label{tab_apx:vehicle_parameters_protocol}
\begin{tabular}{lrl}
\toprule
Hyperparameter  & Value & Description \\
\midrule
$N$ & 200 & Number of episodes \\
$T$ & 12 & Number of steps \\
$k$ & 25 & Number of discretization segments per axis  \\
$\sigma$ & 0.0175 & Standard deviation of the system noise  \\
$L_h$ & 0.1 & Lipschitz constant in \Cref{eqn:calibrated_safe}  \\
$\lambda$ & 1 & Log-barrier hyperparameter in \Cref{eqn:calibrated_safe} \\
\# of representative agents & 1 to 10 & By default 1, but in some experiments, we use more \\
\bottomrule
\end{tabular}
\end{table*}

\begin{table*}[hbt!]
\centering
\caption{Policy hyperparameters for vehicle repositioning}
\label{tab_apx:vehicle_parameters_policy}
\begin{tabular}{lrl}
\toprule
Hyperparameter & Value & Description \\
\midrule
\# of hidden layers & 2 &   \\
\# of neurons & 256 & Number of neurons per hidden layer \\
hidden activations & Leaky-ReLU & \\
output activation & Tanh & \\
$\alpha$ & $10^{-4}$ & Learning rate \\
$w$ & $5\cdot 10^{-4}$ & Weight decay \\
initialization & Xavier uniform & \\
bias initialization & 0 & \\
$n$ & 20,000 & Number of epochs \\
early stopping & 0.5\% & If not improved after 500 epochs \\
\bottomrule
\end{tabular}
\end{table*}

\begin{table*}[hbt!]
\centering
\caption{Probabilistic neural network ensemble hyperparameters for vehicle repositioning}
\label{tab_apx:vehicle_parameters_ensemble}
\begin{tabular}{lrl}
\toprule
Hyperparameter & Value & Description \\
\midrule
\# of ensemble members & 10 & \\
\# of hidden layers & 2 &   \\
\# of neurons & 16 & Number of neurons per hidden layer \\
hidden activations & Leaky-ReLU & \\
mean output activation & Linear & \\
variance output activation & Softplus & \\
$\alpha$ & $10^{-4}$ & Learning rate \\
$w$ & $5\cdot 10^{-4}$ & Weight decay \\
$\beta$ & 1 & \Cref{asm:calibrated_model} hyperparameter \\
initialization & Xavier uniform & \\
bias initialization & 0 & \\
$n$ & 10,000 & Number of epochs \\
early stopping & 0.5\% & If not improved for 100 consecutive epochs \\
train-validation split & 90\%-10\% & We use a validation set for early stopping \\
replay buffer size & 10-100 & By default 100 \\
batch size & 8-128 & Increasing with the replay buffer size \\
\bottomrule
\end{tabular}
\end{table*}

\begin{table*}[hbt!]
\centering
\caption{Learning protocol hyperparameters for swarm motion}
\label{tab_apx:swarm_parameters_protocol}
\begin{tabular}{lrl}
\toprule
Hyperparameter  & Value & Description \\
\midrule
$N$ & 200 & Number of episodes \\
$T$ & 100 & Number of steps \\
$k$ & 100 & Number of discretization segments per axis  \\
$\sigma$ & 1 & Standard deviation of the system noise  \\
$L_h$ & $10^{-4}$ & Lipschitz constant in \Cref{eqn:calibrated_safe}  \\
$\lambda$ & 15 & Log-barrier hyperparameter in \Cref{eqn:calibrated_safe} \\
\# of representative agents & 1 & \\
\bottomrule
\end{tabular}
\end{table*}

\begin{table*}[hbt!]
\centering
\caption{Policy hyperparameters for swarm motion}
\label{tab_apx:swarm_parameters_policy}
\begin{tabular}{lrl}
\toprule
Hyperparameter & Value & Description \\
\midrule
\# of hidden layers & 2 &   \\
\# of neurons & 16 & Number of neurons per hidden layer \\
hidden activations & Leaky-ReLU & \\
output activation & Tanh & \\
$\alpha$ & $5\cdot 10^{-3}$ & Learning rate \\
$w$ & $5\cdot 10^{-4}$ & Weight decay \\
initialization & Xavier uniform & \\
bias initialization & 0 & \\
$n$ & 50,000 & Number of epochs \\
early stopping & 0.5\% & If not improved after 100 epochs \\
\bottomrule
\end{tabular}
\end{table*}

\begin{table*}[hbt!]
\centering
\caption{Probabilistic neural network ensemble hyperparameters for swarm motion}
\label{tab_apx:swarm_parameters_ensemble}
\begin{tabular}{lrl}
\toprule
Hyperparameter & Value & Description \\
\midrule
\# of ensemble members & 10 & \\
\# of hidden layers & 2 &   \\
\# of neurons & 16 & Number of neurons per hidden layer \\
hidden activations & Leaky-ReLU & \\
mean output activation & Linear & \\
variance output activation & Softplus & \\
$\alpha$ & $5\cdot10^{-3}$ & Learning rate \\
$w$ & $5\cdot 10^{-4}$ & Weight decay \\
$\beta$ & 1 & \Cref{asm:calibrated_model} hyperparameter \\
initialization & Xavier uniform & \\
bias initialization & 0 & \\
$n$ & 10,000 & Number of epochs \\
early stopping & 0.5\% & If not improved for 30 consecutive epochs \\
train-validation split & 90\%-10\% & We use a validation set for early stopping \\
replay buffer size & 10,000 &  \\
batch size & 8-512 & Increasing with the replay buffer size \\
\bottomrule
\end{tabular}
\end{table*}

\end{document}